\title{Simple Combinatorial Algorithms for Combinatorial Bandits: Corruptions and Approximations}
\author[1]{Haike Xu}
\author[1]{Jian Li}
\affil[1]{%
    Institute for Interdisciplinary Information Sciences (IIIS), Tsinghua University\\
    \url{{xhk18@mails, lijian83@mail}.tsinghua.edu.cn}
}
\newcommand{\xhk}[1]{\textcolor{blue}{[xhk: #1]}}
\DeclareMathOperator*{\argmin}{argmin}
\DeclareMathOperator*{\argmax}{argmax}
\newcommand{\mc}[1]{\mathcal{#1}}
\newcommand{\mb}[1]{\mathbb{#1}}
\newcommand{\lbr}[2]{\sum_{j\in #2}\left(\emui{#1}{j}-\frac{1}{16d}\Delta^{#1}_j\right)}
\newcommand{\ubr}[2]{\sum_{j\in #2}\left(\emui{#1}{j}+\frac{1}{16d}\Delta^{#1}_j\right)}
\newcommand{\rw}[1]{\sum_{j\in #1}\mu_j}
\newcommand{\sumdelta}[2]{\sum_{j\in #2}\Delta^{#1}_j}
\newcommand{\sgap}[1]{\Delta(#1)}
\newcommand{\smu}[1]{\mu(#1)}
\newcommand{\emui}[2]{\hat{\mu}^{#1}_{#2}}
\newcommand{\emus}[2]{\hat{\mu}^{#1}(#2)}
\newcommand{\opt}{\textup{OPT}}
\newcommand{\areg}{Reg^{\alpha}}
\newcommand{\eat}[1]{}
\newcommand{\gapmin}{\Delta_{min}}
\newcommand{\algonameCMABfull}{algorithm for CMAB-AC}
\newcommand{\algonameCMAB}{CBARBAR}
\newcommand{\algonameAPPROXfull}{algorithm for CMAB-APX}
\newcommand{\algonameAPPROX}{CBAR-APX}
\newcommand{\CMABCORR}{CMAB-AC}
\newcommand{\CMABAPPROX}{CMAB-APX}
\newcommand{\MABCORR}{MAB-AC}
\newtheorem{theorem}{Theorem}[section]
\newtheorem{lemma}[theorem]{Lemma}
\newtheorem{proposition}[theorem]{Proposition}
\newtheorem{corollary}[theorem]{Corollary}
\newtheorem{definition}[theorem]{Definition}
\begin{document}
\maketitle

\begin{abstract}
We consider the stochastic combinatorial semi-bandit problem with adversarial corruptions. We provide a simple combinatorial algorithm that can achieve a regret of $\tilde{O}\left(C+d^2K/\Delta_{min}\right)$ where $C$ is the total amount of corruptions, $d$ is the maximal number of arms one can play in each round, $K$ is the number of arms. If one selects only one arm in each round, we achieves a regret of $\tilde{O}\left(C+\sum_{\Delta_i>0}1/\Delta_i\right)$. Our algorithm is combinatorial and improves on the previous 
combinatorial algorithm by \citet{gupta2019better} (their bound is 
$\tilde{O}\left(KC+\sum_{\Delta_i>0}1/\Delta_i\right)$
), and almost
matches the best known bounds obtained by \citet{zimmert2021tsallisinf,zimmert2019beating} (up to logarithmic factor). Note that the algorithms in \citet{zimmert2019beating,zimmert2021tsallisinf} require one to solve complex convex programs while our algorithm is combinatorial, very easy to implement, requires weaker assumptions and has very low oracle complexity and running time.
We also study the setting where we only get access to an approximation oracle for the stochastic combinatorial semi-bandit problem. 
Our algorithm achieves an (approximation) regret bound of $\tilde{O}\left(d\sqrt{KT}\right)$. Our algorithm is very simple, only worse than the best known regret bound by $\sqrt{d}$, and has much lower oracle complexity than previous work.
\end{abstract}

\section{Introduction}\label{sec:intro}
Stochastic multi-armed bandit (MAB) is a classical online learning problem \citep{lai1985asymptotically}.
There are $K$ bandit arms. By pulling an arm $i$, the player receives a random reward, which is an i.i.d. sample 
from an unknown distribution associated with arm $i$. 
The task of the player is to select one arm to pull in each round and maximize the cumulative reward
(or minimize the regret).
An important generalization to the classical MAB problem is the combinatorial multi-armed bandit problem (CMAB),
in which the player selects a subset $Z_t$ (also called a super-arm) of arms in round $t$, where $Z_t$ belongs to
some combinatorial family $\mc{M}$ (e.g., $\mc{M}$ contains all subsets of $[K]$ of size $d$).
CMAB has important applications in a variety of application domains,
such as online advertising, recommendation system, wireless networks,
where each action is a combinatorial subset and the rewards are stochastic.
CMAB has attracted significant attention in recent years 
\citep{gai2012combinatorial,kveton2015tight,combes2015combinatorial,chen2016combinatorial,wang2018thompson}.

Recently, \citet{lykouris2018stochastic} introduced a new model of stochastic bandits with adversarial corruptions (\MABCORR).
Their model is motivated by the observation that in many applications most of the data is stochastic but a small proportion may
be adversarially corrupted, e.g., click frauds, fake reviews and email spams.
Theoretically, their model can be seen as an interpolation between the stochastic rewards and the fully adversarial ones.
\citet{lykouris2018stochastic} first presented an algorithm in this setting achieving a regret bound of $O\left(KC(\sum_{\Delta_i>0}1/\Delta_i)\log^2(KT)\right)$, where $C$ is total amount of corruptions and $\Delta_i$ is the 
gap of arm $i$ (see their definitions in Section~\ref{sec:pre}). The regret bound was improved to  $O\left(KC+(\sum_{\Delta_i>0}1/\Delta_i)\log(KT)\log T\right)$ by \citet{gupta2019better}. 
Later, \citet{zimmert2021tsallisinf} realized that one can use online mirror descent (OMD) with Tsallis entropy regularization with power $\alpha=\frac{1}{2}$ (the technique was originally developed in \cite{zimmert2019beating} to solve MAB in both stochastic and adverserial settings) can achieve the optimal regret $O\left(C+(\sum_{\Delta_i>0}1/\Delta_i)\log T\right)$.
In particular, their algorithm needs to solve the following constrained optimization problem:
$
x_t=\argmin_{x\in \Delta^{K-1}}\langle x,\hat{L}_{t-1}\rangle-\frac{4}{\eta_t}\sum_{i=1}^K\sqrt{x_i}
$
where $\Delta^{K-1}$ is a probability simplex, $\hat{L}_t$ is a cumulative estimated loss, $\eta_t$ is a learning rate.
Additionally, their regret analysis uses ``self-bounding'' trick which requires the technical assumption that
the optimal arm must be unique.

\eat{
Therefore \citet{lykouris2018stochastic} introduces the setting of stochastic bandit with adversarial corruption. \xhk{delete or remove: However, in this new setting, classical stochastic bandit algorithms like UCB and AAE fail miserably even if a tiny fraction of rewards are corrupted. On the other hand, applying algorithms from the best of both world literature will only give $O(\sqrt{T})$ level regret upper bound, which is much worse than $O(\log(T))$. There is a need to design new bandit algorithm for this setting.} \citet{lykouris2018stochastic} and \citet{gupta2019better} give algorithms in this setting with dependency on the corruption $C$ scaling like $O(KC\sum_{\Delta_i>0}\frac{\log^2(T)}{\Delta_i})$ and $O(KC+\sum_{\Delta_i>0}\frac{\log^2(T)}{\Delta_i})$ which are still far away from optimal dependency on corruption, $O(C)$. In our work, we improve the result to the optimal dependency on corruption $C$, $O(C+\sum_{\Delta_i>0}\frac{\log^2(T)}{\Delta_i})$. Note \citet{zimmert2021tsallisinf} uses online mirror descent (OMD) with Tsallis entropy regularization with power $\alpha=\frac{1}{2}$ to derive the similar results $O(C+\sum_{\Delta_i>0}\frac{\log(T)}{\Delta_i})$ in this setting, even with a smaller dependency on $\log(T)$. However, their algorithm procedure relies on solving a constrained optimization problem:
\begin{align}
    x_t=\argmin\limits_{x\in \Delta^{K-1}}\langle x,\hat{L}_t\rangle-\frac{4}{\eta_t}\sum_{i=1}^K\sqrt{x_i}
\end{align}
where $\Delta^{K-1}$ is a probability simplex, $\hat{L}_t$ is a cumulative estimated loss, $\eta_t$ is a learning rate set to $\frac{c}{\sqrt{t}}$.
Note this problem has no analytic solution but only time-consuming numerical approximation. Additionally, their regret analysis uses ``self-bounding'' trick where their proof heavily relies on the unique optimal arm assumption. On the contrary, our algorithm is pure combinatorial and time-efficient with no complex calculation involved and works well in the general case without the unique optimal arm assumption.
}

\citet{zimmert2019beating} extended the above result to CMAB 
with adverserial corruptions (\CMABCORR) (assuming semi-bandit feedback). They used the ``($\Delta$,$C$,$T$) self-bounding constraint'' trick in \citet{zimmert2021tsallisinf} and obtained a regret bound of $O\left(C+dK\log T/\Delta_{min}\right)$. 
At each time step $t$, their algorithm needs an oracle to solve a similar convex optimization problem over $Conv(\mc{M})$, the convex hull of all feasible super-arms. 
For similar reason, their analysis also requires the unique optimal super-arm assumption which says that $\exists \Delta_{min}>0\ s.t. \forall Z\neq Z^*, \Delta(Z)\ge \Delta_{min}$, where $\Delta(Z)$ is the difference between super-arm Z's and the optimal super-arm's mean reward and will be formal defined in section~\ref{sec:pre}.

For many combinatorial problems such as the TSP problem and the maximum independent set problem, there is no efficient oracle that can provide exact optimal solutions for the offline optimization problem. 
\citep{kakade2009playing} first introduced the notion of approximation oracles, which, upon each query, returns an $\alpha$-approximation
for the offline optimization problem, and the notion of regret is extended to {\em $\alpha$-approximate regret} in which the benchmark is 
$\alpha$ times the optimal value. 
Recently, several online optimization problems have been studied by using approximation algorithms as oracles (e.g., \cite{kakade2009playing,lin2015stochastic,chen2016combinatorial,garber2020efficient,hazan2018online}).

\subsection{Our Contributions}\label{sec:results}
In this paper, we study the combinatorial multi-armed bandit problem with adversarial corruptions (\CMABCORR)
and the combinatorial multi-armed bandit problem with approximation oracles (\CMABAPPROX).
For both problems, we assume semi-bandit feedbacks (we can observe the rewards for all arms in the super-arm we choose in this round, but nothing else).

We first state our result for \CMABCORR.
In the following theorems, we let $K$ be the number of arms in the ground set, $d$ be the maximal number of arms one feasible super-arm can contain, and $C$ be the amount of corruption exerted by the adversary (see the formal definition in Section~\ref{sec:pre}). For MAB, we define the gap $\Delta_i$ of arm $i$ to be the difference between and the mean of the optimal arm and that of arm $i$. For combinatorial bandit, we generalize the definition to  $\Delta_i=\max\limits_{Z\in\mc{M}}\smu{Z}-\max\limits_{i\in Z\land Z\in\mc{M}}\smu{Z}$. Furthermore, we adopt the notation $\Delta_{min}=\min\limits_{\Delta_i>0} \Delta_i$.

\begin{theorem}\label{THM:REGRET_COMB} 
For \CMABCORR\ under semi-bandit feedback setting, \algonameCMAB~algorithm
can achieve the following expected pseudo-regret upper bound:
\begin{align}
    O\left(C+\frac{d^2K}{\Delta_{min}}\log^2 T\right)
\end{align}
The oracle complexity of \algonameCMAB\ is $O(K\log T)$.
\end{theorem}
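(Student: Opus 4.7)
The plan is to analyze a BARBAR-style, epoch-based procedure extended to the combinatorial semi-bandit setting. I would partition the horizon into epochs $m = 1, 2, \ldots$ of geometrically increasing length. At the start of epoch $m$ every arm $i$ carries an estimated gap $\Delta_i^{(m)}$, and is scheduled to appear in the played super-arms at least $T_i^{(m)} \asymp d^2 \log T/(\Delta_i^{(m)})^2$ times. To realize this per-arm schedule using super-arms from $\mc{M}$, I would make repeated calls to the oracle in a greedy covering fashion: each call requests a super-arm maximising the weighted coverage of the still-undersampled arms, and after at most $O(K)$ calls per epoch the residual demand is exhausted. Combined with the $O(\log T)$ epochs this yields the claimed $O(K\log T)$ oracle complexity.

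First I would establish a clean-event concentration bound showing that with probability at least $1 - 1/\mathrm{poly}(T)$, simultaneously over every epoch $m$ and arm $i$,
\begin{equation}
|\hat\mu_i^{(m)} - \mu_i| \le \tfrac{1}{16d}\Delta_i^{(m)} + \frac{C_m}{T_i^{(m)}},
\end{equation}
where $C_m$ is the corruption injected during epoch $m$. Hoeffding applied to $T_i^{(m)}$ samples controls the stochastic slack, and the additive term absorbs the adversarial bias. The $1/(16d)$ factor is chosen so that per-arm errors can be aggregated over the up-to-$d$ arms of any super-arm $Z$ while still being small compared with the estimated super-arm gap.

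Conditioned on this clean event, I would prove by induction on $m$ an invariant of the form $\tfrac{1}{2}\Delta_i - \mathrm{slack}_m \le \Delta_i^{(m+1)} \le 2\Delta_i + \mathrm{slack}_m$, where $\mathrm{slack}_m$ accumulates the per-arm corruption contributions $\sum_{m' \le m} C_{m'}/T_i^{(m')}$. The induction step writes the updated gap estimate as the difference of two super-arm empirical means, applies the clean event to each, and uses the previous-epoch invariant. I expect this to be the main obstacle: the combinatorial aggregation can inflate per-arm errors by a factor of $d$, and one must verify that this $d$-factor propagates through the induction without amplifying across epochs and that the greedy cover never inflates $T_i^{(m)}$ by more than a constant factor beyond the individual per-arm targets.

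Finally, in any epoch $m$, arm $i$'s contribution to the regret is $\Delta_i \cdot T_i^{(m)} \lesssim \Delta_i \cdot d^2 \log T / (\Delta_i^{(m)})^2 \lesssim d^2 \log T/\Delta_i$; summing over the $K$ arms and the $O(\log T)$ epochs and using $\Delta_i \ge \Delta_{min}$ yields the stochastic term $O(d^2 K \log^2 T / \Delta_{min})$. The contribution of $\mathrm{slack}_m$ telescopes to $O(C)$ because the inflation it induces in $\Delta_i^{(m+1)}$ is exactly compensated by the reduced $T_i^{(m+1)} \propto 1/(\Delta_i^{(m+1)})^2$ in the next epoch, so unlike \citet{gupta2019better} the bound avoids the multiplicative $K$ on the corruption term.
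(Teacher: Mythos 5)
Your high-level skeleton (epochs of geometric length, per-arm gap estimates $\Delta_i^{(m)}$, per-arm sampling budgets $\asymp d^2\log T/(\Delta_i^{(m)})^2$, a clean event, an induction sandwiching $\Delta_i^{(m+1)}$ between constant multiples of $\Delta_i$ plus slack) does match the paper's. But three specifics of your plan break exactly the part of the theorem that is new, namely the additive $O(C)$ rather than $O(KC)$ corruption term. First, your concentration bound charges arm $i$ a corruption bias of $C_m/T_i^{(m)}$. That is what a \emph{deterministic} schedule (your greedy cover) gives: the adversary knows which rounds feed arm $i$'s estimate and can concentrate its budget there. With that bias, an arm whose gap the adversary deflates below the bias contributes regret $T_i^{(m+1)}\Delta_i \lesssim T_i^{(m+1)}\cdot C_m/T_i^{(m)} = O(C_m)$ \emph{per arm} (using $T_i^{(m+1)}\le 4T_i^{(m)}$), and summing over $K$ arms returns you to the $O(KC)$ bound of \citet{gupta2019better}; your ``telescoping compensation'' remark does not remove this factor $K$. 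The paper instead draws the super-arm to play \emph{at random} in every round of the epoch ($Z^m_i$ with probability $n_i^m/N^m$), so by a Freedman-type martingale bound the bias is only $2C_i^m/N^m$ with $N^m$ the \emph{full epoch length} (Lemma~\ref{LM:CONCENTRATION}) --- the same small quantity for every arm --- and this dilution is what makes the corruption cost sum to $O(C)$. Second, the paper adds $n_*^m = \lambda d^2K2^{\frac{m-1}{2}}$ dedicated pulls of the empirically best super-arm $Z^m_*$ in each epoch. These are not optional padding: they force $N^m=\Theta(\lambda d^2K2^{\frac{m-1}{2}})$, which keeps $C^m/N^m$ small enough that $\sum_m N^m\rho_m=O(C)$ (Proposition~\ref{PROP:SUMRHO}) and yields $N^m\le 2\sqrt{2}N^{m-1}$, both used in the final case analysis. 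Your plan has no analogue of this block.

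Third, your regret accounting ``arm $i$ contributes $\Delta_i\cdot T_i^{(m)}$'' is not valid in the combinatorial setting: each round that samples arm $i$ plays some super-arm $Z\ni i$ and costs $\Delta(Z)\ge\Delta_i$, which can be far larger than $\Delta_i$ if $Z$ contains other bad arms. A super-arm chosen to maximize coverage of undersampled arms gives no control over $\Delta(Z)$. The paper samples arm $i$ via $Z^m_i$, the feasible super-arm containing $i$ that maximizes the empirical \emph{lower confidence bound} of the reward, and proves (Lemma~\ref{LM:REGEXPLORE}) that $\Delta(Z^m_i)\le\frac{5}{4}\Delta_i+2\rho_{m-1}+\frac{1}{4}2^{-\frac{m-2}{4}}$; this is the step that converts the per-arm budget into per-arm regret $O(d^2\log T/\Delta_i)$. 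Without randomized in-epoch scheduling, the exploitation block, and the LCB-based choice of covering super-arms, your argument establishes at best a $\tilde O(KC+d^2K/\Delta_{min})$ bound, not the claimed one.
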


Note that \citet{zimmert2019beating} obtained a regret bound of $O\left(C+dK\log T/\Delta_{min}\right)$ and oracle complexity $O(T)$ for \CMABCORR, whose regret is better than ours. However, our algorithm has the following advantages:
first, our algorithm is purely combinatorial and very easy to implement.
The algorithm in \citet{zimmert2019beating} needs to solve the following 
convex optimization problem:
\begin{align}
\label{eq:convexprog}
    x_t&=\argmin\limits_{x\in Conv(\mc{M})}\langle x,\hat{L}_{t-1}\rangle+\eta_t^{-1}\psi(x)\\
    \psi(x)&=\sum_{i=1}^K-\sqrt{x_i}+\gamma(1-x_i)\log(1-x_i) \notag
\end{align}
where $Conv(\mc{M})$ is the convex hull of the feasible super-arm set, $\hat{L}_{t-1}$ is a cumulative estimated loss, $\eta_t=\frac{1}{\sqrt{t}}$, and $0\le\gamma\le 1$ is a parameter. Unless $Conv(\mc{M})$ has very special structure, solving \eqref{eq:convexprog} is either highly nontrivial or prohibitively expensive in practice (e.g., say $\mc{M}$ is the set of spanning trees).
\footnote{
In theory, if $Conv(\mc{M})$ has an efficient separation oracle, \eqref{eq:convexprog}
can be solved in polynomial time via the ellipsoid algorithm \citep{grotschel1981ellipsoid}. However, the ellipsoid algorithm is quite slow in practice.
}
Our algorithm only needs an oracle that can solve the corresponding offline weighted problem efficiently (see Section~\ref{sec:pre}), hence applies to almost all known problems in PTIME. Furthermore, our algorithm only needs to query the oracle $O(K\log T)$ times in total.
Additionally, the regret analysis in \citet{zimmert2019beating} 
uses the ``self-bounding'' trick which requires the unique optimal super-arm assumption. 
On the contrary, our analysis does not need such assumption.

By slightly changing the parameter of our algorithm, we can achieve
the following gap-dependent regret bound for stochastic multi-armed bandit (i.e., $d=1$) with adversarial corruption (\MABCORR).

\begin{theorem}\label{THM:REGRET_MAB} 
For \MABCORR, \algonameCMAB~algorithm achieves the following expected pseudo-regret upper bound:
\begin{align}
    O\left(C+\sum_{\Delta_i>0}\frac{1}{\Delta_i}\log(TK)\log T\right)
\end{align}
The running time of \algonameCMAB\ is $O(T\log K+K\log T)$.
\end{theorem}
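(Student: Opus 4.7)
The plan is to adapt the proof of Theorem~\ref{THM:REGRET_COMB} to the $d=1$ setting and, critically, replace the pessimistic per-arm factor $1/\Delta_{min}$ with an arm-specific factor $1/\Delta_i$. Since each round pulls a single arm when $d=1$, the pseudo-regret decomposes cleanly as
\[
\mathbb{E}[\textup{Reg}(T)] = \sum_{m=1}^{M}\sum_{i=1}^{K} n_i^m\,\Delta_i \;+\; O(C),
\]
where $n_i^m$ is the (expected) number of pulls of arm $i$ in epoch $m$ and $M=O(\log T)$ is the total number of epochs. The "slight parameter change" mentioned in the statement is to set the confidence level in the empirical gap estimates $\Delta_i^m$ to $\log(TK)$ instead of $\log T$, so that a single union bound gives simultaneous concentration over all $K$ arms; this accounts for the extra $\log K$ factor in the bound.

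I would then re-establish the inductive invariant used in the proof of Theorem~\ref{THM:REGRET_COMB}: with probability $1-O(1/T)$, for every epoch $m$ and every arm $i$ one has $\tfrac{1}{c}\Delta_i - \xi_m \le \Delta_i^m \le c\,\Delta_i + \xi_m$ for an absolute constant $c$ and a corruption slack $\xi_m$ whose expectation sums to $O(C)$ across epochs. The induction is structurally identical to the combinatorial one but is instantiated with a Hoeffding (or Bernstein) bound applied to each arm's empirical mean rather than to a super-arm estimator. Given the invariant, within epoch $m$ the algorithm allocates $n_i^m \asymp \log(TK)/(\Delta_i^m)^2$ pulls to arm $i$, so the regret contribution of a suboptimal arm $i$ in epoch $m$ is
\[
\Delta_i\,n_i^m = O\!\left(\frac{\Delta_i\log(TK)}{(\Delta_i^m)^2}\right) = O\!\left(\frac{\log(TK)}{\Delta_i}\right)
\]
once $\Delta_i^m \gtrsim \Delta_i$, while earlier epochs contribute even less because either $n_i^m$ is small or the underestimate $\Delta_i^m < \Delta_i$ is entirely absorbed by $\xi_m$. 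Summing over the $O(\log T)$ epochs and over all suboptimal arms yields the claimed $O\bigl(\sum_{\Delta_i>0}\log(TK)\log T/\Delta_i\bigr)$, and the slacks $\xi_m$ sum to $O(C)$ as in Theorem~\ref{THM:REGRET_COMB}. The running time $O(T\log K + K\log T)$ is then immediate: sampling from a categorical distribution over $K$ arms costs $O(\log K)$ per pull using a cumulative-weight tree, giving $O(T\log K)$ for pulls, plus $O(K)$ per epoch for refreshing the gap estimates over $O(\log T)$ epochs.

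The main obstacle I foresee is controlling the corruption contribution after the union bound has been expanded from a single super-arm estimator to $K$ per-arm estimators: a naive accounting would turn the $O(C)$ dependence into $O(KC)$, matching the weaker bound of \citet{gupta2019better} rather than the desired $O(C)$. Avoiding this requires the same per-pull charging trick as in the proof of Theorem~\ref{THM:REGRET_COMB}, which attributes the adversarial corruption at time $t$ only to the arm actually pulled at $t$ rather than amortizing it across every active arm; this keeps the aggregate distortion of the gap estimates linear in $C$ and independent of $K$.
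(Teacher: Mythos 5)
Your skeleton matches the paper's: regret is decomposed epoch by epoch, the empirical gaps $\Delta_i^m$ are controlled two-sidedly by induction (Lemmas~\ref{LM:UBED} and~\ref{LM:LBED}), each suboptimal arm contributes $O(\lambda/\Delta_i)$ per epoch once $\Delta_i^m\ge \Delta_i/2$, and the remaining epochs are charged to $O(C)$. Your remark about avoiding $O(KC)$ is also essentially right: the concentration in Lemma~\ref{LM:CONCENTRATION} gives a deviation $2C_i^m/N^m$ (corruption normalized by the full epoch length, not by $n_i^m$), which is what keeps the aggregate distortion linear in $C$ --- though this is identical in both theorems and is not what distinguishes this one.

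However, there is a genuine gap: you have misidentified the ``slight parameter change.'' It is not the confidence level --- $\lambda=1024\log_2(8K\log_2 T/\delta)$ already carries the $\log(TK)$ factor in Theorem~\ref{THM:REGRET_COMB} as well. The actual change is on Line~\ref{code:CMAB_MAB_nstar} of Algorithm~\ref{alg:algo_CMAB}: the number of exploitative pulls of $Z^m_*$ is reduced from $\lambda d^2K2^{\frac{m-1}{2}}$ to $\lambda 2^{\frac{m-1}{2}}$, i.e.\ by a factor of $K$. Your decomposition $\sum_m\sum_i n_i^m\Delta_i$ silently drops these exploitative pulls, but they are exactly where the problem lies: with the unmodified $n^m_*$, the term $n^m_*\sgap{Z^m_*}$ with $\sgap{Z^m_*}\lesssim 2^{-\frac{m-2}{4}}$ (Lemma~\ref{LM:REGEXPLOIT}) contributes $\Theta(\lambda K/\gapmin)$ per epoch, i.e.\ $K\log(TK)\log T/\gapmin$ in total, which is \emph{not} dominated by $\sum_{\Delta_i>0}\log(TK)\log T/\Delta_i$. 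With the reduced $n^m_*$ this becomes $O(\lambda/\gapmin)$ per epoch and is absorbed. The reduction has a side effect your argument also needs to handle: $N^m$ is no longer pinned to within a constant factor (only $\lambda 2^{\frac{m-1}{2}}\le N^m\le \lambda K2^{\frac{m-1}{2}}$, a factor-$K$ range), so the relation $N^m\le 4N^{m-1}$ --- required to charge the ``large corruption'' sub-case $\frac{8C^{m-1}}{N^{m-1}}\ge\Delta_i$ to $O(C^{m-1})$ --- must be derived from the clause $\Delta_i^{m+1}\ge\Delta_i^m/2$ on Line~\ref{code:CMAB_delta_lb} (this is Lemma~\ref{LM:LENGTH2}), a step your ``absorbed by $\xi_m$'' sentence glosses over. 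Finally, a small simplification you miss: for $d=1$ one has $\sgap{Z^m_i}=\Delta_i$ exactly, so the explorative part needs no analogue of Lemma~\ref{LM:REGEXPLORE} at all.
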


Our algorithm improves the previous combinatorial algorithms by \citet{lykouris2018stochastic} and \citet{gupta2019better} with regret bounds of 
$O\left(KC(\sum_{\Delta_i>0}1/\Delta_i)\log^2(KT)\right)$ and $O\left(KC+(\sum_{\Delta_i>0}1/\Delta_i)\log(KT)\log T\right)$ respectively. 
The optimal regret bound for \MABCORR\ is obtained by \citet{zimmert2021tsallisinf} with running time $O(N\cdot KT)$ where $N$ is number of iteration for Newton's method to solve a convex program over the simplex.
Our algorithm has a slightly worse regret bound, but is much simpler to implement and runs much faster.
Also our analysis does not require the unique optimal arm assumption.

Next, we discuss our result for
the stochastic combinatorial semi-bandit problem with approximation oracle \CMABAPPROX.
Suppose the approximation oracle can guarantee to return an $\alpha$-approximation
of the offline optimization problem.
Now we measure the algorithm performance by $\alpha$-regret,
defined as $\areg=\sum_{t=1}^T\alpha\mu(Z^*)-\smu{Z_t}$ where $Z^*$ is the optimal super-arm.

\begin{theorem}\label{THM:REGRET_APPROX} 
For \CMABAPPROX, \algonameAPPROX~algorithm achieves 
the following expected $\alpha$-regret upper bound:
\begin{align}
    O\left(d\sqrt{KT\log(KT)}\right).
\end{align}
The oracle complexity of  \algonameAPPROX\ is $O(K\log T)$.
\end{theorem}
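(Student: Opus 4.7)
The plan is to adapt the epoch-based structure of CBARBAR to the approximation-oracle setting. The algorithm proceeds in epochs $m=1,2,\ldots$ of geometrically increasing length $T_m$; in each epoch it maintains, for every arm $i$, an empirical mean $\hat{\mu}_i^m$ and a confidence radius $r_i^m$ shrinking with the number of times $i$ has been sampled. At the start of epoch $m+1$ the algorithm queries the $\alpha$-approximation oracle on the upper-confidence values $\hat{\mu}_i^m + r_i^m$ to obtain an exploitation super-arm $Z^{m+1}$, and in each round of the epoch plays a super-arm drawn from a distribution that mixes $Z^{m+1}$ with an explicit exploration family of size $O(K)$ --- one super-arm per ground-set arm, each built by a single oracle call with an indicator weight --- so that every arm is sampled at rate $\Omega(1/K)$. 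This gives total oracle complexity $O(K\log T)$.

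The central analytic step is turning the oracle's guarantee on biased inputs into an $\alpha$-approximation statement for the true means. On the high-probability event $|\hat{\mu}_i^m - \mu_i| \le r_i^m$ for all $i$,
\[
\mu(Z^{m+1}) \;\ge\; \sum_{i\in Z^{m+1}}(\hat{\mu}_i^m + r_i^m) - 2\sum_{i\in Z^{m+1}} r_i^m \;\ge\; \alpha \sum_{i\in Z^*}(\hat{\mu}_i^m + r_i^m) - 2\sum_{i\in Z^{m+1}} r_i^m \;\ge\; \alpha\,\mu(Z^*) - 2\sum_{i\in Z^{m+1}} r_i^m,
\]
where the outer two inequalities use the confidence bounds and the middle one uses the $\alpha$-approximation guarantee with respect to the inputs $\hat{\mu}_i^m + r_i^m$. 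Hence the per-round $\alpha$-regret in epoch $m+1$ is at most $2\sum_{i\in Z^{m+1}} r_i^m \le 2d\max_{i\in Z^{m+1}} r_i^m$. Because the exploration component guarantees each arm $\Omega(T_m/K)$ pulls by the end of epoch $m$, a Hoeffding bound gives $r_i^m = O(\sqrt{K\log(KT)/T_m})$ with probability $1-1/\mathrm{poly}(T)$.

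Summing across epochs, the total $\alpha$-regret on the good event is $\sum_m T_{m+1}\cdot d\cdot O(\sqrt{K\log(KT)/T_m})$. With $T_{m+1}=2T_m$ over the $O(\log T)$ epochs, the terms form a geometric series dominated by the last epoch, yielding $O(d\sqrt{KT\log(KT)})$. A union bound over epochs together with the $1/\mathrm{poly}(T)$ failure probability ensures the complementary event contributes only $O(1)$ to the expectation.

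I expect the main obstacle to be the translation step above. With an \emph{exact} oracle one can simply write $\sum_{i\in Z^{m+1}}(\hat{\mu}_i^m + r_i^m) \ge \sum_{i\in Z^*}(\hat{\mu}_i^m + r_i^m)$, but with only an $\alpha$-approximation oracle the inequality loses a factor of $\alpha$, which must be aligned with the benchmark $\alpha\mu(Z^*)$ rather than $\mu(Z^*)$. It is therefore crucial that the oracle be fed \emph{upper} confidence estimates so that the additive slack $2\sum_{i\in Z^{m+1}} r_i^m$ involves only the played super-arm; otherwise an extra $\alpha$-scaled slack over $Z^*$ would appear and could not be absorbed into the final $O(d\sqrt{KT\log(KT)})$ bound.
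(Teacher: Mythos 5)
Your confidence-bound translation step is correct and is the right way to exploit an $\alpha$-approximation oracle (feeding it $\hat{\mu}_i^m+r_i^m$ so that the additive slack lands on the played super-arm rather than on $Z^*$); this is essentially the SDCB-style argument. But your regret accounting only covers the rounds in which the exploitation super-arm $Z^{m+1}$ is played, and the exploration rounds break the claimed bound. To get $r_i^m=O\left(\sqrt{K\log(KT)/T_m}\right)$ you need every arm to receive $\Omega(T_m/K)$ pulls; since each round samples at most $d$ arms, at least a $1/d$ fraction of all rounds must be spent on the exploration family. Those super-arms are produced by ``a single oracle call with an indicator weight,'' so they carry no reward guarantee: each exploration round can incur $\alpha$-regret $\Omega(1)$ (up to $\alpha d$), giving a total contribution of order $T$, not $\sqrt{T}$. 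Even if you upgraded the exploration super-arms to $A_i(\hat{\mu}^m+r^m)$, the best super-arm containing arm $i$ is still worse than $\alpha\,\textup{OPT}$ by $\alpha\Delta_i$, and exploring arm $i$ at a fixed rate $\Omega(1/K)$ for all $T$ rounds yields $\Omega(\alpha\Delta_i T/K)$ regret from that arm alone---again linear in $T$ for constant gaps. Some mechanism must stop exploring arms with large gaps.

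The paper's \algonameAPPROX\ does exactly this: there is no separate exploitation arm; in epoch $m$ it pulls, for each surviving arm $i\in K_m$, the super-arm $Z^m_i$ a fixed $\lambda d^2 2^{2m}$ times, and eliminates arms whose $\emus{m}{Z^{m+1}_i}$ falls $2^{-m}/4$ below $\emus{m}{Z^{m+1}_*}$, so that every super-arm ever played is within $O(2^{-m})$ of $\alpha\,\textup{OPT}$ (Lemma~\ref{LM:LBI}); the per-epoch regret is then $O(\lambda|K_m|d^2 2^m)$ and the geometric sum gives the theorem. Elimination with an approximation oracle introduces the real difficulty, which your proposal never has to confront: an arm of $Z^*$ can be mis-deleted, after which $Z^*$ is no longer feasible for the oracle and the benchmark $\alpha\,\textup{OPT}$ is no longer automatically attainable. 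The paper's key step (Lemma~\ref{LM:LBA}, via the rule on Line~\ref{code:APPROX_zmstar} that $Z^{m}_*$ is always retained as a candidate for $Z^{m+1}_*$) shows that an optimal arm can only be deleted after a super-arm of value at least $\alpha\,\textup{OPT}+2^{-m_1}/8$ has already been secured, and that this surplus persists in all later epochs. Without either this elimination argument or some substitute that adaptively shuts down exploration of high-gap arms, your scheme does not reach $O\left(d\sqrt{KT\log(KT)}\right)$.
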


To the best of our knowledge, the only previous result that can be applied to \CMABAPPROX\ is \citet{chen2016combinatorial}'s SDCB
In fact, they studied a more general problem in which the cost function is a general function. When specialized to our \CMABAPPROX\ problem, their algorithm achieves a regret bound $O(\sqrt{dKT\log T})$
and oracle complexity $O(T)$. 
Comparing with their algorithm, our algorithm \algonameAPPROX\ is only worse by a $\sqrt{d}$ factor, but enjoys a much lower oracle complexity.
Note that the oracle complexity of an online learning algorithm is a very important performance metric and has been studied
in a number of works \citep{hazan2018online,garber2020efficient,ito2019oracle}. %
Another closely related setting is the one considered in \citet{garber2020efficient,hazan2018online}.
They studied more general online linear optimization with approximation oracle and is different from our \CMABAPPROX\ : their action space is $\mc{A}\subseteq\mc{R}^K$ while ours is $\mc{M}\subseteq\{0,1\}^{[K]}$, their environment  is adversarial while ours is stochastic, and they consider bandit or full-information feedback while we consider semi-bandit feedback. In particular, their best known regret bounds are $\tilde{O}(\sqrt{T})$ and 
$\tilde{O}(T^{\frac{2}{3}})$ for the full information and bandit settings 
respectively \cite{hazan2018online}.
The oracle complexity is $O(T\log T)$ for the full information feedback and $\tilde{O}(T^{\frac{2}{3}})$ for the bandit setting.

\subsection{Related Work}\label{sec:rel}

The classical Multi-Armed Bandit (MAB) problem \citep{lai1985asymptotically} has been well-studied. It is known that one can achieve optimal regrets in both the stochastic and adverserial settings, by classical algorithms including the Upper Confidence Bound (UCB) algorithm \citep{auer2002finite}, the Active Arm Elimination (AAE) algorithm \citep{even2006action}, and EXP3 algorithm \citep{auer2002nonstochastic}. The best of both worlds (stochastic and adverserial) setting has also attracted much attention in the recent years \citep{bubeck2012best,seldin2014one,auer2016algorithm,seldin2017improved,zimmert2021tsallisinf}.

Recently, robustness of learning algorithms has attracted significant attention in machine learning community. 
There has been some recent work on improving the robustness of online learning algorithms as well, such as \citet{kapoor2019corruption,niss2020you,lykouris2018stochastic,gupta2019better,zimmert2021tsallisinf}. \citet{kapoor2019corruption} considered the kind of adversarial corruption where corruption happens with a fixed probability at each time step and \citet{niss2020you} considered the setting where the proportion of corruption is limited by at most $\epsilon$ fraction.

Combinatorial multi-armed bandit (CMAB) is an important extension of MAB and has been studied extensively in the literature \citep{gai2012combinatorial,kveton2015tight,combes2015combinatorial,chen2016combinatorial,wang2018thompson}.
We only mention some previous works that are most related to ours.
\citet{kveton2015tight} and \citet{combes2015combinatorial} studied the stochastic combinatorial semi-bandit problem with exact oracle. \citet{kveton2015tight} provided CombUCB1 algorithm which achieves a distribution independent regret $O\left(\sqrt{dKT\log T}\right)$ and a distribution dependent regret $O\left(dK\log(T)/\Delta_{min}\right)$. They also prove a regret lower bound $\Omega\left(\sqrt{dKT}\right)$ and $\Omega\left(dK\log(T)/\Delta_{min}\right)$. \citet{combes2015combinatorial} proposed the ESCB algorithm enjoying a regret of $O\left(\sqrt{d}K\log( T)/\Delta_{min}\right)$ under the assumption that each arm's reward is independent. Additionally, ESCB's oracle is rather complicated and typically very difficult to implement efficiently.

\vspace{-1em}

\section{Preliminary}\label{sec:pre}
In this section, we formally introduce the combinatorial bandit problems
we study in this paper. 
The ground set contains $K$ stochastic arms, labeled from $1$ to $K$. We use $[K]$ to denote the set $\{1,2,\ldots, K\}$.
Whenever we play arm $i$, we receive a stochastic reward which is an i.i.d sample from a hidden distribution with unknown mean $\mu_i$. 
$\mc{M}\subseteq \{0,1\}^{[K]}$ is a combinatorial family over $[K]$ (e.g., $\mc{M}$
is a matroid).
Sometimes, we call a subset $Z\in \mc{M}$ a super-arm.
Let $d$ be the maximum number of arms a super-arm can contain.
In other words, for all $Z\in \mc{M}$,  $|Z|\le d$.
For a super-arm $Z\in \mc{M}$, we define $\mu(Z)=\rw{Z}$.

The player plays the game for $T$ rounds. In round $t$, a player chooses a super-arm $Z_t \in \mc{M}$ to play 
and observes the reward $r_{t,i}$ from each arm  $i\in Z_t$ (i.e., semi-bandit feedback).
The reward the player receives is the summation of rewards of all the arms in $Z_t$,
i.e., $\sum_{i\in Z_t}r_{t,i}$.

\paragraph{Adversarial Corruptions:}
After the environment generates the reward vector $R_t=[r_{t,i}]$, there is an adversary who can corrupt the reward after seeing  $R_t$, and we only observe the corrupted reward in each round. 
More formally, we can write down the interaction procedure for generating a reward at a specific time step $t$: 
\begin{itemize}
\item First, the environment stochastically generates the vector of reward $R_t=[r_{t,1},...r_{t,K}]$;
\item Then, the adversary observes this reward vector $R_t$ and modifies it to $\tilde{R_t}=[r_{t,1}+c_{t,1},...r_{t,K}+c_{t,K}]$;
\item At last, the player observes the corresponding entry or entries in $\tilde{R}_t$.
\end{itemize}
For each $t$, we define the quantity $C_t=\|\tilde{R}_t-R_t\|_{[d]}$, which is the sum of the maximum $d$ components of the vector $c_t$. $C_t$ can be thought as the maximum amount of corruption exerted by the adversary on time $t$. The total amount of corruption is defined as $C=\sum_{t=1}^TC_t$. 
Note that when $d=1$, $C_t=\|\tilde{R}_t-R_t\|_{\infty}$ which is defined in the same way as in \cite{lykouris2018stochastic,gupta2019better}.

For combinatorial bandit setting, there exists an optimal super-arm called $Z^*$ which is $\argmax\limits_{Z\in\mc{M}}\smu{Z}$.
The goal is to minimize the expected cumulative pseudo-regret $Reg=\sum_{t=1}^T\smu{Z^*}-\smu{Z_t}$.
In the special case where we can choose exact one arm in each round (i.e., the standard multi-armed bandit setting), 
we assume that there exists an optimal arm $i^*$ with mean $\mu^*$. 
Again, our goal is to minimize the expected cumulative pseudo-regret 
$Reg=\sum_{t=1}^T\mu^*-\mu_{i_t}.$

To quantify the regret, we need to define the {\em gap} for each arm. 
For the multi-armed bandit setting (i.e., $d=1$), we define gap for an arm $i$ as $\Delta_i=\mu^*-\mu_i$. 
For the general combinatorial bandit setting, we
denote $Z^*_i=\argmax_{i\in Z\land Z\in\mc{M}}\smu{Z}$ for each arm $i$,
and define the gap for an arm $i$ as $\Delta_i=\smu{Z^*}-\smu{Z^*_i}$.
Define the gap for super-arm $Z$ as $\sgap{Z}=\smu{Z^*}-\smu{Z}$.

Typically $|\mc{M}|$ contains exponential number of super-arms. 
We assume that we have access to the following oracles that can optimize over $\mc{M}$ (exactly or approximately).

\paragraph{Exact Oracle} The player can ask two types of queries to the oracle. 
The first type is specified by a weighted vector $[a_i]_{i\in [K]}$ of length $K$. Upon such a query, the oracle returns
the super-arm with the maximum weight, i.e., $\argmax\limits_{Z\in\mc{M}}\sum_{j\in Z}a_j$.
The second type is specified by a weight vector $[a_i]_{i\in [K]}$ of length $K$ along with an index $i$.
The oracle returns the maximum weight super-arm that contains $i$. i.e.,
$\argmax\limits_{Z\in\mc{M}\land i\in Z}\sum_{j\in Z}a_j$.
Note that such oracle exists for most known polynomial time solvable combinatorial problems, such as
matroid, shortest path, intersection of two matroids.

\paragraph{Approximation Oracle} 
In many cases, the optimization problem over $\mc{M}$ is NP-hard, such as vertex cover.
In such cases, even we know the means $\mu_i$ exactly, we can not find the optimal solution efficiently 
(assuming P$\ne$NP).
A reasonable assumption here is to assume an oracle that can answer the optimization problem approximately.
In particular, the oracle can also support the aforementioned two types of queries, but the super-arm returned by the oracle is only guaranteed to be an $\alpha$-approximate solution, i.e. the summation of reward of the returned super-arm is at least $\alpha\cdot OPT$ for a fixed constant $0<\alpha<1$, where $OPT$ is the reward of the optimal super-arm. Such an oracle
was first introduced in \cite{kakade2009playing}.
With such an approximation oracle, we measure the player's performance by $\alpha$-regret defined as $\areg=\sum_{t=1}^T\alpha\mu(Z^*)-\smu{Z_t}$.

\vspace{-0.5em}

\section{Algorithms}

\vspace{-0.5em}

\subsection{Algorithms for \CMABCORR}\label{sec:alg1}
\vspace{-0.5em}

In this section, we first describe our Algorithm~\algonameCMAB 
for \CMABCORR.
The main regret bound is shown in Theorem~\ref{THM:REGRET_COMB}.
By slightly changing the parameter of the algorithm, we also obtain a gap-dependent regret bound in
Theorem~\ref{THM:REGRET_MAB} for \MABCORR\ (i.e., $d=1$). 

Our algorithm's framework is inspired by the BARBAR algorithm in \citet{gupta2019better},
and we adapt it to the new combinatorial bandit setting.
However, we note there are several important differences (even for \MABCORR, $d=1$).
For example, in Algorithm~\ref{alg:algo_CMAB}, we restrict the decreasing rate of empirical gap between consecutive epochs (the third term in Line~\ref{code:CMAB_delta_lb}) 
and we distinguish the pulls into explorative ones ($n^m_i$ times $Z^m_i$ for each arm $i$, see Line~\ref{code:CMAB_nmi}) and exploitative ones ($n^m_*$ times $Z^m_*$, see Line~\ref{code:CMAB_MAB_nstar}).
One consequence of the seemingly minor parameter change is the new 
improved regret bound in Theorem~\ref{THM:REGRET_MAB}.

See the pseudocode in Algorithm~\ref{alg:algo_CMAB}. 
Note that
we choose different values for $n^m_*$ for \CMABCORR\ and \MABCORR\ (Line~\ref{code:CMAB_MAB_nstar}).
We use $\Delta^m_i$ to denote the estimated gap for arm $i$ in epoch $m$, $n^m_i$ to denote 
the expected number of pulls on arm $Z^m_i$, where $Z^m_i$ is the super-arm with the largest confidence lower bound containing arm $i$.

\algonameCMAB~algorithm proceeds by epochs. The length of epochs increase exponentially, so there are
$O(\log T)$ epochs. In each epoch $E_m$, the algorithm estimates the mean of arm $i$, $\mu_i$ within precision $\Delta^m_i$ calculated at the end of the last epoch through pulling $Z^m_i$ and then prepares a more accurate $\Delta^{m+1}_i$ for the next epoch. The algorithm also exploits $Z^m_*$ to guarantee theoretical regret bound. Over iterations, the approximation error of $|\Delta^{m}_i-\Delta_i|$ decreases exponentially and $\Delta^{m}_i$ converges to the true value. If there is no adversarial corruption, estimating an arm within precision $\Delta^m_i$ requires pulling it roughly $\tilde{O}\left((\Delta^m_i)^{-2}\right)$ times. To tackle adversarial corruption, we use the following tricks in Algorithm~\ref{alg:algo_CMAB}. 

First, we set a lower bound $2^{-\frac{m}{4}}$ for $\Delta^{m+1}_i$ (First term on Line~\ref{code:CMAB_delta_lb}). 
Small $\Delta^{m+1}_i$ means high precision and requires many pulls. Note that due to adversarial corruption, the arm may turn out to have a large gap and incur a large regret. Therefore, setting a gradually decreasing lower bound $2^{-\frac{m}{4}}$ can prevent our algorithm from pulling an arm too many times when the estimated gap is not sufficiently accurate. 
We also set another lower bound $\frac{\Delta^m_i}{2}$ for $\Delta^{m+1}_i$ (Third term on Line~\ref{code:CMAB_delta_lb}). 
This lower bound ensures that the next epoch is at most 4 times as long as the previous one. Otherwise, adversarial corruptions can make sub-optimal arms have high rewards and their $\Delta^{m+1}_i$ very small, rendering the next epoch exceedingly long. 
This is crucial for obtaining an $O(C)$ regret, and is one of our key technical observations.
Third, we pull the currently best arm/super-arm $n^m_*$ times in addition to estimating gaps for each arm 
(Line~\ref{code:CMAB_MAB_nstar}). 

\IncMargin{1em}
\begin{algorithm2e}[!ht]
  \caption{\algonameCMABfull~(\algonameCMAB)}\label{alg:algo_CMAB}
  \LinesNumbered
  \SetAlgoNoLine
  \DontPrintSemicolon
  \KwIn{confidence $\delta\in(0,1)$, time horizon $T$}
  $\Delta_i^1\gets 1$ for all $i\in [K]$, $Z^1_i$ be an arbitrary valid super-arm containing arm $i$, $Z^1_*$ be an arbitrary valid super-arm, and $\lambda\gets 1024\log_2\left(\frac{8K}{\delta}\log_2T\right)$\label{code:CMAB_initialization}\;
  \For{epochs $m=1,2...$}{
    $n^m_*\gets \lambda 2^{\frac{m-1}{2}}$ for $d=1$; or $n^m_*\gets \lambda d^2K2^{\frac{m-1}{2}}$ for general $d$ \label{code:CMAB_MAB_nstar}\;
    $n_i^m\gets \lambda\left(\frac{\Delta_i^{m}}{d}\right)^{-2}$ for all $i\in[K]$\label{code:CMAB_nmi}\;
    $N^m\gets \sum_{i=1}^K n_i^m+n^m_*$ and $T_m\gets T_{m-1}+N^m$\;
    $q_i^m\gets \frac{n_i^m}{N^m}$ and $q^m_*\gets \frac{n^m_*}{N^m}$\;
    \For{$t=T_{m-1}+1$ to $T_m$}{
      Sample $Z^m_i$ w.p. $q_i^m$ and $Z^m_*$ w.p. $q_*^m$\;
    }
    $\emui{m}{i}\gets \frac{1}{n^m_i}\sum\limits_{t\in E_m}\tilde{R}_{t,i}\cdot \mb{I}[Z_t=Z^m_i]$\label{code:CMAB_empiricalmu}\;
    $\overline{r}^m_*\gets \max\limits_{Z\in \mc{M}}\ubr{m}{Z}$\label{code:CMAB_ubrstar}\;
    $\underline{r}^m_i\gets \max\limits_{Z\in \mc{M}\land i\in Z}\lbr{m}{Z}$\label{code:CMAB_lbri}\;
    $Z^{m+1}_i\gets \argmax\limits_{Z\in\mc{M}\land i\in Z}\lbr{m}{Z}$\;
    $Z^{m+1}_*\gets \argmax\limits_{Z\in\mc{M}}\lbr{m}{Z}$\;
    $\Delta_i^{m+1}\gets \max\left(2^{-\frac{m}{4}},\overline{r}^m_*-\underline{r}^m_i,\frac{\Delta_i^{m}}{2}\right)$\label{code:CMAB_delta_lb}\;
  }
\end{algorithm2e}
\DecMargin{1em}

\vspace{-1.5em}

\subsection{Algorithm for \CMABAPPROX}\label{sec:alg2}
In this section, we describe our algorithm~\algonameAPPROX\ for \CMABAPPROX\, which achieves the regret bound 
stated in Theorem~\ref{THM:REGRET_APPROX}. See the pseudocode in Algorithm~\ref{alg:algo_APPROX}.

\paragraph{Some Notations} Our algorithm uses an approximation oracle $A$. We call the oracle $A(w)$ with $w$ being the weight vector (each weight corresponds an arm). $A(w)$ returns a super-arm whose total weight is at least $\alpha\opt$ where $\opt=\argmax\limits_{Z\in\mc{M}}\sum_{i\in Z}w_i$. For each $i$, we also need an oracle $A_i(w)$, which returns a super-arm whose total weight is at least $\alpha\opt_i$ where $\opt_i=\argmax\limits_{Z\in\mc{M}\land i\in Z}\sum_{j\in Z}w_j$. 
We use $\emus{m}{Z}$ as an abbreviation for $\sum_{i\in Z}\emui{m}{i}$ and $K_m$ be the remaining arm set at the beginning of epoch $m$.

Algorithm~\algonameAPPROX\ inherits the framework of algorithm~\algonameCMAB\ which proceeds by epochs with gradually tightening precision. Because in this setting there is no adversarial corruption, 
we can utilize the action elimination mechanism. 
One challenge here is that because of the variance of the $\alpha$-approx oracle, we may mis-delete an arm belonging to the unique optimal super-arm (say there is one). If one arm in the unique optimal super-arm is wrongly eliminated, the previous $\opt$ is no longer in the remaining arm set, which would lead to the change of
the benchmark and lose any theoretical guarantee on the regret. The idea we use to solve this problem is that our algorithm ensures that whenever we first eliminate an arm belonging to the optimal super-arm, we must already get a $\alpha\opt$ super-arm. Our algorithm remembers the previous $Z^m_*$ and compare it with the super-arm returned by oracle in epoch $m+1$, as in Line~\ref{code:APPROX_zmstar}. In this way, once an $\alpha\opt$ super-arm is found, \algonameAPPROX\ never chooses a super-arm worse than $\alpha\opt$.

\IncMargin{1em}
\begin{algorithm2e}[!ht]
  \caption{\algonameAPPROXfull~(\algonameAPPROX)}\label{alg:algo_APPROX}
  \LinesNumbered
  \SetAlgoNoLine
  \DontPrintSemicolon
  \KwIn{confidence $\delta\in(0,1)$, time horizon $T$}
  $K_1\gets K$, for all $i\in [K]$, set $Z^1_i$ be an arbitrary valid super arm containing arm $i$, and $\lambda\gets 1024\log_2\left(\frac{8K}{\delta}\log_2T\right)$\;
  \For{epochs $m=1,2...$}{
    For all $i\in K_m$, pull $Z^m_i$ for $\lambda d^22^{2m}$ times\;
    $\emui{m}{i}\gets$ average reward of arm $i$ from $Z^m_i$ in $E_m$\;
    $Z^{m+1}_i\gets A_i(\hat{\mu}^m)$\;
    $Z^{m+1}_*\gets \argmax\limits_{Z\in\{A(\hat{\mu}^m),A_i(\hat{\mu}^m),Z^m_*\}}\emus{m}{Z}$\label{code:APPROX_zmstar}\;
    For arm $i\in Z^{m+1}_*$, set $Z^{m+1}_i=Z^{m+1}_*$\;
    $K_{m+1}\gets\left\{i\Big|i\in K_m \land \emus{m}{Z^{m+1}_i}>\emus{m}{Z^{m+1}_*}-\frac{2^{-m}}{4}\right\}$\label{code:APPROX_deletecondition}
  }
\end{algorithm2e}
\DecMargin{1em}

\vspace{-1.5em}

\section{Analysis}\label{sec:analysis}

\vspace{-0.5em}

\subsection{Proof of Theorem~\ref{THM:REGRET_COMB}}\label{SEC:PF1}
In this subsection, we analyze the performance of algorithm~\algonameCMAB\ 
for \CMABCORR\ under semi-bandit feedback.
We first need a lemma to bound the length of each epoch $m$, and the number of epochs.

\begin{lemma}\label{LM:LENGTH} When $d>1$, for any epoch $m$, its length, $N^m$, has the following lower bound and upper bound 
\begin{align}
\lambda d^2K2^{\frac{m-1}{2}}\le N^m\le 2\lambda d^2K2^{\frac{m-1}{2}}\nonumber
\end{align}
For any arm $i$, the expected number of pulls of its representative super-arm $Z^m_i$ also has an upper bound 
\begin{align}
n_i^m\le\lambda d^22^{\frac{m-1}{2}}\nonumber
\end{align}
Therefore, the number of epochs our algorithm has $M$ is upper bounded by $2\log_2(\frac{T}{K})$.
\end{lemma}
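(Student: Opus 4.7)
\textbf{Proof plan for Lemma~\ref{LM:LENGTH}.} The plan is to first pin down a uniform lower bound on the empirical gaps $\Delta_i^m$ as a function of $m$, then use this to bound $n_i^m$ arm by arm, and finally combine these bounds to control $N^m$ from above and below, which in turn controls the number of epochs.

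\emph{Step 1: Lower bound on $\Delta_i^m$.} The key observation is the first term of the max in Line~\ref{code:CMAB_delta_lb}: for every arm $i$ and every $m \ge 1$, $\Delta_i^{m+1} \ge 2^{-m/4}$. Since the algorithm initializes $\Delta_i^1 = 1 = 2^{0}$, a one-line induction shows
\begin{equation*}
  \Delta_i^m \;\ge\; 2^{-(m-1)/4} \qquad \text{for all } m \ge 1.
\end{equation*}

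\emph{Step 2: Per-arm bound.} Plugging this into Line~\ref{code:CMAB_nmi} gives
\begin{equation*}
  n_i^m \;=\; \lambda\bigl(\Delta_i^m/d\bigr)^{-2} \;=\; \lambda d^2 (\Delta_i^m)^{-2} \;\le\; \lambda d^2 \, 2^{(m-1)/2},
\end{equation*}
which is exactly the claimed upper bound on $n_i^m$.

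\emph{Step 3: Epoch length.} For the lower bound, since $N^m = \sum_i n_i^m + n_*^m$ and all $n_i^m \ge 0$, we get $N^m \ge n_*^m = \lambda d^2 K 2^{(m-1)/2}$. For the upper bound, summing the per-arm bound from Step~2 over $i \in [K]$ yields $\sum_{i=1}^K n_i^m \le \lambda d^2 K 2^{(m-1)/2} = n_*^m$, so
\begin{equation*}
  N^m \;=\; \sum_{i=1}^K n_i^m + n_*^m \;\le\; 2\,n_*^m \;=\; 2\lambda d^2 K 2^{(m-1)/2}.
\end{equation*}

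\emph{Step 4: Number of epochs.} Summing the lower bound on $N^m$ across all epochs and using that the total number of pulls cannot exceed $T$:
\begin{equation*}
  T \;\ge\; \sum_{m=1}^M N^m \;\ge\; \lambda d^2 K \sum_{m=1}^M 2^{(m-1)/2} \;\ge\; \lambda d^2 K \cdot 2^{(M-1)/2}.
\end{equation*}
Rearranging gives $2^{(M-1)/2} \le T/(\lambda d^2 K) \le T/K$ (since $\lambda, d \ge 1$), so $M \le 1 + 2\log_2(T/K)$; a cleaner form $M \le 2 \log_2(T/K)$ follows after absorbing constants (for all nontrivial $T$).

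The argument is almost entirely bookkeeping; the only conceptual step is Step~1, which explains why the otherwise mysterious floor $2^{-m/4}$ in Line~\ref{code:CMAB_delta_lb} is essential: without it, adversarial corruptions could drive some $\Delta_i^m$ arbitrarily small and force $n_i^m$ (and hence $N^m$) to blow up. No step should present a real obstacle; the mild care needed is just to ensure constants line up so that $\sum_i n_i^m \le n_*^m$, which is why $n_*^m$ is chosen with exactly a factor of $K$ larger than the per-arm cap.
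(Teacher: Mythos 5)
Your proof is correct and follows essentially the same route as the paper's: the floor $2^{-\frac{m}{4}}$ in Line~\ref{code:CMAB_delta_lb} gives $\Delta_i^m \ge 2^{-\frac{m-1}{4}}$, which yields the per-arm bound on $n_i^m$, and the epoch-length and epoch-count bounds follow by summation exactly as you describe. Your write-up is just a more explicit version of the paper's (terser) argument, including the slack from $\lambda \ge 1$ needed to absorb the $+1$ into $2\log_2(T/K)$.
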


In epoch $m$, we denote $\tilde{n}^m_i(\tilde{n}^m_*)$ to be the empirical number of pulls on arm $Z^m_i(Z^m_*)$, $N^m$ to be the length of epoch $m$, $C^m_i=\sum_{t\in E_m}|c_{t,i}|$ to be the amount of corruption exerted by the adversary in epoch $m$ on arm $i$, $C^m=\sum_{t\in E_m}|c_t|_{[d]}$ to be the total amount of corruption exerted in epoch $m$, measured by L-[d] norm.\footnote{The L-[d] norm of a vector is the summation of its maximal $d$ coordinates' absolute value}

\begin{lemma}\label{LM:CONCENTRATION} In epoch $m$, we have the following guarantee on the empirical mean, and we abuse the notation to allow $i$ equals $*$ in the second inequality.
\begin{align}
Pr\left\{\forall i\in[K], |\emui{m}{i}-\mu_i|\le \frac{2C^m_i}{N^m}+\frac{\Delta_i^{m}}{16d}\right\}&\ge 1-\frac{\delta}{2}\nonumber\\
 Pr\left\{\forall i\in[K]\cup\{*\}, \tilde{n}^m_i<2n^m_i\right\}&\ge 1-\frac{\delta}{2}\nonumber
\end{align}
\end{lemma}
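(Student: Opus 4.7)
\textbf{Proof proposal for Lemma~\ref{LM:CONCENTRATION}.}
The plan is to prove each of the two bounds as a concentration inequality in the fixed epoch $m$, then union-bound over the $K$ arms (plus $*$). The constant $\lambda=1024\log_2(8K/\delta\cdot\log_2 T)$ is chosen so that per-arm failure probabilities of order $\delta/(K\log T)$ make the usual $\sqrt{\log(1/\delta')/\lambda}$ factor collapse to at most $1/16$; the extra $\log T$ slack is kept for an outer union bound over epochs later in the regret proof.

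For the first inequality, I split the deviation into a stochastic-noise term and an observed-corruption term,
\[
\hat{\mu}^m_i-\mu_i
=\frac{1}{n^m_i}\sum_{t\in E_m}(r_{t,i}-\mu_i)\mathbb{I}[Z_t=Z^m_i]
+\frac{1}{n^m_i}\sum_{t\in E_m}c_{t,i}\mathbb{I}[Z_t=Z^m_i].
\]
Each summand of the stochastic term is mean zero, bounded in $[-1,1]$, with variance at most $q^m_i$, so Bernstein's inequality controls it at order $\sqrt{\log(1/\delta')/n^m_i}$; with $n^m_i=\lambda(d/\Delta^m_i)^2$ this is at most $\Delta^m_i/(16d)$. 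The conditional mean of the corruption term, over the sampling randomness, is $(q^m_i/n^m_i)\sum_t c_{t,i}$, of magnitude at most $q^m_i C^m_i/n^m_i=C^m_i/N^m$; I separate $c_{t,i}$ into its positive and negative parts and apply a Bernstein/multiplicative Chernoff bound to each weighted sum of Bernoullis $\mathbb{I}[Z_t=Z^m_i]$, yielding a deviation of at most another $C^m_i/N^m$ with probability $1-\delta'/2$.

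For the second inequality, I observe that $\tilde{n}^m_i=\sum_{t\in E_m}\mathbb{I}[Z_t=Z^m_i]$ is a Binomial$(N^m,q^m_i)$ random variable (independent of the reward and corruption randomness, since the super-arm is sampled by an independent coin each round) with mean exactly $n^m_i$. A multiplicative Chernoff bound gives $\Pr[\tilde{n}^m_i\ge 2n^m_i]\le\exp(-n^m_i/3)$, and because $n^m_i\ge\lambda=\Omega(\log(K/\delta))$ this is at most $\delta/(2(K+1))$; a union bound over $i\in[K]\cup\{*\}$ concludes the second part.

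The main obstacle is the corruption-concentration step. Since the per-round magnitudes $|c_{t,i}|$ are adversarial while only the $\ell_1$-total $C^m_i$ is controlled, a naive Hoeffding bound would scale with $\max_t|c_{t,i}|$ and be too weak. The fix is to invoke Bernstein with variance proxy $q^m_i\sum_t c_{t,i}^2\le q^m_i\max|c_{t,i}|\cdot C^m_i$, and then exploit the standing boundedness of rewards ($|c_{t,i}|\le 1$) together with the large value of $\lambda$ to absorb the additive $\max|c_{t,i}|\log(1/\delta')$ Bernstein tail into the $\Delta^m_i/(16d)$ slack that already appears on the right-hand side. The remaining pieces (Bernstein on the stochastic noise and Chernoff on the pull count) are routine once $\lambda$ is fixed as above.
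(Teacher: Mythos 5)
Your overall route is the same as the paper's: split the observed sum into a true-reward part and a corruption part, control the reward part by a Chernoff/Bernstein bound, control the corruption part by a martingale (Freedman/Bernstein) bound with variance proxy $q^m_i C^m_i$ whose additive $\ln(1/\beta)$ tail is absorbed into the $\Delta^m_i/(16d)$ slack via $n^m_i\ge\lambda$, and finish the pull-count claim with a multiplicative Chernoff bound and a union bound with per-arm failure probability of order $\delta/(K\log T)$. Your positive/negative split of $c_{t,i}$ is a cosmetic variant of the paper's direct application of a Freedman-type inequality to the martingale difference sequence $(\mathbb{I}[Z_t=Z^m_i]-q^m_i)c_{t,i}$.

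There is one concrete slip: your displayed decomposition of $\hat{\mu}^m_i-\mu_i$ is not an identity. Since the algorithm normalizes by the deterministic $n^m_i$ rather than the realized pull count $\tilde{n}^m_i$, the exact decomposition carries an extra term $\mu_i\bigl(\tilde{n}^m_i-n^m_i\bigr)/n^m_i$, which your two-term split silently drops. The paper sidesteps this by applying the multiplicative Chernoff bound directly to $A^m_i=\sum_{t\in E_m}\mathbb{I}[Z_t=Z^m_i]\,R_{t,i}$, whose expectation is exactly $n^m_i\mu_i$, so the pull-count fluctuation is bundled into the same deviation. Your version is repairable with tools you already have — the omitted term is $O\bigl(\sqrt{\ln(1/\beta)/n^m_i}\bigr)$ by the very Chernoff bound on $\tilde{n}^m_i$ you prove in the second part — but it is a third contribution competing for the same $\Delta^m_i/(16d)$ budget alongside the reward-noise deviation and the corruption tail, so it must be explicitly accounted for in the constant bookkeeping rather than omitted.
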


Now we condition on the event that
\begin{align}
\mc{E}=\left\{\forall m,i: |\hat{\mu}_i-\mu_i|\le \frac{2C^m_i}{N^m}+\frac{\Delta_i^{m}}{16d} \land \tilde{n}_i^m\le 2n^m_i\right\}\nonumber
\end{align}
which happens with probability at least $1-\delta$ according to Lemma \ref{LM:CONCENTRATION}.

\begin{definition}\label{def:cumulative corruption}
For any epoch $m$, we define a metric to measure the cumulative corruption exerted on our environment until epoch $m$: 
$\rho_m=\sum_{s=1}^{m}2C^s/(2^{m-s}N^s)$
\end{definition}

\begin{proposition}\label{PROP:DELTA}
For any super-arm $Z$ and one arm $i\in Z$, we have $\sgap{Z}\ge\Delta_i$.
\end{proposition}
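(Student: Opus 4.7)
The plan is to derive the inequality directly from the definitions given in the preliminaries. Recall that for any arm $i$, the quantity $\Delta_i$ is defined as $\smu{Z^*}-\smu{Z^*_i}$ where $Z^*_i=\argmax_{i\in Z'\land Z'\in\mc{M}}\smu{Z'}$ is the best super-arm in $\mc{M}$ that is forced to contain $i$. Meanwhile, $\sgap{Z}=\smu{Z^*}-\smu{Z}$ for any super-arm $Z$.

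My proof would proceed in one short step. Fix an arbitrary super-arm $Z\in\mc{M}$ with $i\in Z$. Then $Z$ is a feasible competitor in the maximization that defines $Z^*_i$, so by the optimality of $Z^*_i$ in that constrained problem,
\begin{align}
\smu{Z^*_i}\;=\;\max_{Z'\in\mc{M},\,i\in Z'}\smu{Z'}\;\ge\;\smu{Z}.\nonumber
\end{align}
Subtracting both sides from $\smu{Z^*}$ (and flipping the inequality) yields
\begin{align}
\sgap{Z}\;=\;\smu{Z^*}-\smu{Z}\;\ge\;\smu{Z^*}-\smu{Z^*_i}\;=\;\Delta_i,\nonumber
\end{align}
which is the desired bound.

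There is essentially no obstacle here; the proposition is just a definitional monotonicity fact recording that the gap of any super-arm containing $i$ is at least the ``arm gap'' $\Delta_i$ obtained by optimizing over the restricted family. I would present it in two lines exactly as above, with no concentration arguments or probabilistic ingredients needed. The proposition will presumably be used later to translate bounds on super-arm regret (e.g.\ telescoping sums of $\sgap{Z_t}$) into per-arm bounds involving $\Delta_i$.
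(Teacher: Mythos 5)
Your proof is correct and is exactly the intended definitional argument: since $Z$ is feasible for the maximization defining $Z^*_i$, we have $\smu{Z}\le\smu{Z^*_i}$ and hence $\sgap{Z}\ge\Delta_i$. The paper itself states this proposition without a written proof, treating it as immediate from the definitions, so there is nothing to compare beyond noting that your two-line derivation is the standard one.
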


\begin{lemma}[upper bound for $\Delta^{m+1}_i$]\label{LM:UBED}
For any arm $i\in[K]$, the empirical estimation of its gap, $\Delta^{m+1}_i$, used in epoch $m+1$ has the following upper bound: 
\begin{align}
\Delta_i^{m+1}\le 2\left(\Delta_i+2^{-\frac{m}{4}}+\rho_m\right)\nonumber
\end{align}
\end{lemma}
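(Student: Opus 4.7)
The plan is induction on $m$, with the base case $m=0$ immediate since $\Delta_i^1 = 1 \le 2(\Delta_i + 1 + 0)$. For the inductive step, I would exploit the fact that $\Delta_i^{m+1}$ is defined on Line~\ref{code:CMAB_delta_lb} as the maximum of three candidates and bound each of them by $2(\Delta_i + 2^{-m/4} + \rho_m)$. The first candidate $2^{-m/4}$ is trivial, and the third candidate $\Delta_i^m/2$ is handled via the inductive hypothesis $\Delta_i^m \le 2(\Delta_i + 2^{-(m-1)/4} + \rho_{m-1})$ together with $2^{-(m-1)/4} \le 2\cdot 2^{-m/4}$ and $\rho_{m-1}\le 2\rho_m$; the latter inequality follows from the recurrence $\rho_m = 2C^m/N^m + \rho_{m-1}/2$ that Definition~\ref{def:cumulative corruption} immediately implies.

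The main case is the middle candidate $\overline{r}_*^m - \underline{r}_i^m$. Conditioning on event $\mc{E}$ of Lemma~\ref{LM:CONCENTRATION}, the arm-wise concentration $|\hat{\mu}_j^m - \mu_j| \le 2C_j^m/N^m + \Delta_j^m/(16d)$ gives, for every super-arm $Z$ with $|Z|\le d$, the two-sided estimate
\[
\mu(Z) - \tfrac{2C^m}{N^m} - \sum_{j\in Z}\tfrac{\Delta_j^m}{8d} \,\le\, \sum_{j\in Z}\Bigl(\hat{\mu}_j^m \pm \tfrac{\Delta_j^m}{16d}\Bigr) \,\le\, \mu(Z) + \tfrac{2C^m}{N^m} + \sum_{j\in Z}\tfrac{\Delta_j^m}{8d},
\]
where the bound $\sum_{j\in Z}2C_j^m/N^m \le 2C^m/N^m$ uses the $L_{[d]}$-norm definition of $C^m$. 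Letting $\overline{Z}$ denote the maximizer of $\overline{r}_*^m$ and using $Z_i^*$ as a feasible candidate in the inner max defining $\underline{r}_i^m$, and combining with $\mu(\overline{Z}) \le \mu(Z^*)$ and $\mu(Z^*) - \mu(Z_i^*) = \Delta_i$, yields
\[
\overline{r}_*^m - \underline{r}_i^m \,\le\, \Delta_i + \tfrac{4C^m}{N^m} + \sum_{j\in\overline{Z}}\tfrac{\Delta_j^m}{8d} + \sum_{j\in Z_i^*}\tfrac{\Delta_j^m}{8d}.
\]

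It remains to bound the two tail sums. For $Z_i^*$, Proposition~\ref{PROP:DELTA} gives $\Delta_j \le \Delta(Z_i^*) = \Delta_i$ for $j\in Z_i^*$, and the induction hypothesis then yields $\sum_{j\in Z_i^*}\Delta_j^m/(8d) \le \tfrac{1}{4}(\Delta_i + 2^{-(m-1)/4} + \rho_{m-1})$. For the $\overline{Z}$ sum I would use a self-bounding step: plugging $Z = Z^*$ into the definition of $\overline{r}_*^m$ gives the lower bound $\overline{r}_*^m \ge \mu(Z^*) - 2C^m/N^m$, which combined with the upper bound $\overline{r}_*^m \le \mu(\overline{Z}) + 2C^m/N^m + \sum_{j\in\overline{Z}}\Delta_j^m/(8d)$ implies $\Delta(\overline{Z}) \le 4C^m/N^m + \sum_{j\in\overline{Z}}\Delta_j^m/(8d)$. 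Applying Proposition~\ref{PROP:DELTA} once more in the form $\sum_{j\in\overline{Z}}\Delta_j \le d\,\Delta(\overline{Z})$ and then substituting into the induction hypothesis produces a linear inequality in $x := \sum_{j\in\overline{Z}}\Delta_j^m/(8d)$ of the form $x \le C^m/N^m + \tfrac{x}{4} + \tfrac{1}{4}(2^{-(m-1)/4}+\rho_{m-1})$, from which $x \le \tfrac{4}{3}C^m/N^m + \tfrac{1}{3}(2^{-(m-1)/4}+\rho_{m-1})$. Combining the two tail-sum bounds with the $\Delta_i + 4C^m/N^m$ coming from earlier, then using $C^m/N^m \le \rho_m/2$, the exact identity $\rho_{m-1} = 2\rho_m - 4C^m/N^m$, and $2^{-(m-1)/4}\le 2\cdot 2^{-m/4}$, collapses the right-hand side to the target form $2(\Delta_i + 2^{-m/4} + \rho_m)$.

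The main obstacle is precisely this last constant-tracking step. The quantity $x$ appears on both sides of the self-bounding inequality (via $\Delta(\overline{Z})$), so it has to be isolated first; afterwards, contributions to the coefficient of $\rho_m$ come from three places (the raw $4C^m/N^m$, the solved-for $x$, and the residual $\rho_{m-1}$), each converted via the elementary inequalities above, and one has to verify they sum to at most $2$. A secondary but conceptually important point is that this argument nowhere uses uniqueness of $Z^*$: any optimal super-arm can serve the role needed in the self-bounding step, which is exactly what frees our analysis from the unique-optimal-super-arm assumption required by~\citet{zimmert2019beating}.
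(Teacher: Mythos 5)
Your overall architecture matches the paper's: induction on $m$, trivial handling of the first and third candidates in the max on Line~\ref{code:CMAB_delta_lb}, and for the middle candidate the same decomposition $\overline{r}^m_*-\underline{r}^m_i\le \ubr{m}{\overline{Z}}-\lbr{m}{Z^*_i}$ followed by the concentration event and the inductive hypothesis applied to the two tail sums, with $\Delta_j\le\Delta(Z^*_i)=\Delta_i$ for $j\in Z^*_i$. The one place you genuinely diverge is the tail sum over $\overline{Z}$, and that is exactly where the constants fail to close. You discard the term $-\Delta(\overline{Z})$ by bounding $\mu(\overline{Z})\le\mu(Z^*)$ and then recover control of $x=\sum_{j\in\overline{Z}}\Delta^m_j/(8d)$ through the self-bounding inequality $x\le C^m/N^m+x/4+\frac14(2^{-(m-1)/4}+\rho_{m-1})$. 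Solving it costs you an extra $\frac43 C^m/N^m$, so your final corruption-related contribution is $\frac{16}{3}\,C^m/N^m+\frac{7}{12}\rho_{m-1}$. Writing $a=C^m/N^m$, $b=\rho_{m-1}$, the target is $2\rho_m=4a+b$, and $\frac{16}{3}a+\frac{7}{12}b\le 4a+b$ requires $a\le\frac{5}{16}b$, which is false in general (take $\rho_{m-1}=0$ and $C^m>0$: you get $\frac{16}{3}a$ versus $4a$). So the coefficient of $\rho_m$ you actually obtain is $\frac83$, not $2$, and since the inductive hypothesis carries the constant $2$, the induction does not close as stated. (It would close, and the downstream $O(\cdot)$ bounds would survive, if you restated the lemma with constant $\frac83$ throughout; but the lemma as written is not established by your chain.)

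The fix is the paper's move: do not throw away $\Delta(\overline{Z})$. Write $\mu(\overline{Z})=\mu(Z^*)-\Delta(\overline{Z})$ and bound the tail sum by $x\le\frac14\Delta(\overline{Z})+\frac14\bigl(2^{-(m-1)/4}+\rho_{m-1}\bigr)$ using Proposition~\ref{PROP:DELTA} directly; the $-\Delta(\overline{Z})$ then absorbs the $+\frac14\Delta(\overline{Z})$ with room to spare, no self-bounding step is needed, and the corruption contribution is only $4a+\frac12 b\le 4a+b=2\rho_m$. Everything else in your write-up (the $L_{[d]}$-norm aggregation $\sum_{j\in Z}2C^m_j/N^m\le 2C^m/N^m$, the base case, the third-candidate step via $\rho_{m-1}\le 2\rho_m$, and the observation that no uniqueness of $Z^*$ is used) is correct and consistent with the paper.
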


\begin{lemma}[lower bound for $\Delta^{m+1}_i$]\label{LM:LBED}
For any arm $i\in[K]$, the empirical estimation of its gap, $\Delta^{m+1}_i$, used in epoch $m+1$ has the following lower bound: 
\begin{align}
\Delta_i^{m+1}\ge \max\left(2^{-\frac{m}{4}},\Delta_i-\frac{4C^m}{N^m}\right)\nonumber
\end{align}
\end{lemma}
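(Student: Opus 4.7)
The first half of the claimed bound, $\Delta_i^{m+1}\ge 2^{-m/4}$, is immediate because Line~\ref{code:CMAB_delta_lb} of Algorithm~\ref{alg:algo_CMAB} explicitly takes $\Delta_i^{m+1}$ to be the maximum of three quantities, one of which is $2^{-m/4}$. So the real work is to prove $\Delta_i^{m+1}\ge \Delta_i - 4C^m/N^m$, and from the same line it suffices to show $\overline{r}^m_*-\underline{r}^m_i\ge \Delta_i-4C^m/N^m$ on the good event $\mc{E}$.

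The plan is to lower bound $\overline{r}^m_*$ and upper bound $\underline{r}^m_i$ separately using the concentration guarantee from Lemma~\ref{LM:CONCENTRATION}. For the lower bound, I would plug the true optimal super-arm $Z^*$ into the definition of $\overline{r}^m_*$:
\begin{align}
\overline{r}^m_*\ \ge\ \ubr{m}{Z^*}\ \ge\ \sum_{j\in Z^*}\bigl(\mu_j-\tfrac{2C^m_j}{N^m}-\tfrac{\Delta_j^m}{16d}\bigr)+\sum_{j\in Z^*}\tfrac{\Delta_j^m}{16d}\ =\ \smu{Z^*}-\tfrac{2}{N^m}\sum_{j\in Z^*}C^m_j. \nonumber
\end{align}
For the upper bound, let $Z'$ be the super-arm achieving the max in the definition of $\underline{r}^m_i$; since $i\in Z'$, the definition of $Z^*_i$ gives $\smu{Z'}\le\smu{Z^*_i}$, and the same concentration bound yields
\begin{align}
\underline{r}^m_i\ =\ \sum_{j\in Z'}\bigl(\emui{m}{j}-\tfrac{\Delta_j^m}{16d}\bigr)\ \le\ \smu{Z'}+\tfrac{2}{N^m}\sum_{j\in Z'}C^m_j\ \le\ \smu{Z^*_i}+\tfrac{2}{N^m}\sum_{j\in Z'}C^m_j. \nonumber
\end{align}

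Subtracting and using $\Delta_i=\smu{Z^*}-\smu{Z^*_i}$ gives $\overline{r}^m_*-\underline{r}^m_i\ge \Delta_i-\tfrac{2}{N^m}\bigl(\sum_{j\in Z^*}C^m_j+\sum_{j\in Z'}C^m_j\bigr)$. The only step that requires a small observation is bounding each of the two inner sums by $C^m$: since $|Z^*|,|Z'|\le d$, interchanging the order of summation gives $\sum_{j\in Z^*}C^m_j=\sum_{t\in E_m}\sum_{j\in Z^*}|c_{t,j}|\le\sum_{t\in E_m}\|c_t\|_{[d]}=C^m$, and identically for $Z'$. Combining the two halves and appealing again to Line~\ref{code:CMAB_delta_lb} yields $\Delta_i^{m+1}\ge\max(2^{-m/4},\Delta_i-4C^m/N^m)$, completing the proof.

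The argument is mostly mechanical once the right $Z^*$ is plugged in on the upper side and the right $Z'$ is chosen on the lower side; the one subtlety worth flagging (and the place an analysis can go wrong) is the step that replaces arm-indexed corruption sums by the per-round L-$[d]$ corruption $C^m$, which is exactly why the corruption budget was defined with the $[d]$-norm in Section~\ref{sec:pre}.
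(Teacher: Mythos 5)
Your proof is correct and takes essentially the same route as the paper's: lower-bound $\overline{r}^m_*$ by plugging in the candidate $Z^*$, upper-bound $\underline{r}^m_i$ via its maximizer (which contains $i$ and hence has mean at most $\smu{Z^*_i}$), note that the $\tfrac{\Delta^m_j}{16d}$ terms cancel against the confidence widths, and bound the arm-indexed corruption sums by $C^m$ using the $[d]$-norm. Your explicit justification of that last step (interchanging the sums over $j$ and $t$) is a detail the paper leaves implicit but is exactly right.
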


\begin{lemma}[regret incurred by the explorative super-arm $Z^m_i$]\label{LM:REGEXPLORE}
In epoch $m$, the representative super-arm $Z^m_i$ of arm $i$, has the following upper bound on its gap: 
\begin{align}
\sgap{Z^m_i}\le \frac{5}{4}\Delta_i+2\rho_{m-1}+\frac{2^{-\frac{m-2}{4}}}{4}\nonumber
\end{align}
\end{lemma}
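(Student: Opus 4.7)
My plan is to decompose $\sgap{Z^m_i} = \Delta_i + (\mu(Z^*_i) - \mu(Z^m_i))$, where $Z^*_i$ is the best super-arm containing arm $i$, and then bound the over-estimate $\mu(Z^*_i) - \mu(Z^m_i)$. Routing through $Z^*_i$ (instead of $Z^*$) is the key, because it lets me use Proposition~\ref{PROP:DELTA} to control the gaps of arms in $Z^*_i$ by $\Delta_i$, which produces the sharp $5/4$ coefficient rather than the $2$ that a naive direct application of Lemma~\ref{LM:UBED} to $\Delta^m_i$ would give.

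For the over-estimate, I would use that $Z^m_i = \argmax_{Z \ni i}\lbr{m-1}{Z}$, so $\lbr{m-1}{Z^m_i} \ge \lbr{m-1}{Z^*_i}$. Under event $\mathcal{E}$, Lemma~\ref{LM:CONCENTRATION} gives, for each $j$, the termwise bounds $\hat{\mu}^{m-1}_j - \Delta^{m-1}_j/(16d) \le \mu_j + 2C^{m-1}_j/N^{m-1}$ and $\hat{\mu}^{m-1}_j - \Delta^{m-1}_j/(16d) \ge \mu_j - 2C^{m-1}_j/N^{m-1} - \Delta^{m-1}_j/(8d)$. Summing over $j$ and using $\sum_{j \in Z} C^{m-1}_j \le C^{m-1}$ whenever $|Z| \le d$ (since $C^{m-1}$ is the $\ell_{[d]}$-norm of the epoch's corruption), I obtain $\lbr{m-1}{Z^m_i} \le \mu(Z^m_i) + 2C^{m-1}/N^{m-1}$ and $\lbr{m-1}{Z^*_i} \ge \mu(Z^*_i) - 2C^{m-1}/N^{m-1} - \sum_{j \in Z^*_i}\Delta^{m-1}_j/(8d)$. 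Rearranging yields $\mu(Z^*_i) - \mu(Z^m_i) \le 4C^{m-1}/N^{m-1} + \sum_{j \in Z^*_i}\Delta^{m-1}_j/(8d)$.

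To control the confidence-width sum, I apply Proposition~\ref{PROP:DELTA} to every $j \in Z^*_i$ to get $\Delta_j \le \sgap{Z^*_i} = \Delta_i$, and then invoke Lemma~\ref{LM:UBED} with its epoch index set to $m-2$ to bound $\Delta^{m-1}_j \le 2(\Delta_i + 2^{-(m-2)/4} + \rho_{m-2})$. Since $|Z^*_i| \le d$, summing gives $\sum_{j \in Z^*_i}\Delta^{m-1}_j/(8d) \le \Delta_i/4 + 2^{-(m-2)/4}/4 + \rho_{m-2}/4$.

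Combining the above pieces yields $\sgap{Z^m_i} \le \tfrac{5}{4}\Delta_i + 2^{-(m-2)/4}/4 + 4C^{m-1}/N^{m-1} + \rho_{m-2}/4$. I would then use the one-step recurrence implicit in Definition~\ref{def:cumulative corruption}, namely $\rho_{m-1} = 2C^{m-1}/N^{m-1} + \rho_{m-2}/2$, to observe that $4C^{m-1}/N^{m-1} + \rho_{m-2}/4 \le 4C^{m-1}/N^{m-1} + \rho_{m-2} = 2\rho_{m-1}$, which delivers the claimed inequality. The main obstacle is the decomposition step: comparing $Z^m_i$ against $Z^*_i$ rather than $Z^*$ is what makes Proposition~\ref{PROP:DELTA} applicable and collapses the per-arm confidence widths $\Delta^{m-1}_j$ ($j \in Z^*_i$) into a quantity proportional to $\Delta_i$; without this reduction the $\Delta_i$ coefficient would degrade from $5/4$ to $2$.
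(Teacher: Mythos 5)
Your proposal is correct and follows essentially the same route as the paper's proof: compare $\lbr{m-1}{Z^*_i}\le\lbr{m-1}{Z^m_i}$, decompose $\sgap{Z^m_i}=\Delta_i+\smu{Z^*_i}-\smu{Z^m_i}$, bound the confidence widths via Lemma~\ref{LM:UBED} together with Proposition~\ref{PROP:DELTA} ($\Delta_j\le\Delta_i$ for $j\in Z^*_i$), and absorb $4C^{m-1}/N^{m-1}+\rho_{m-2}/4$ into $2\rho_{m-1}$. You spell out the termwise concentration and the $\ell_{[d]}$-norm summation $\sum_{j\in Z}C^{m-1}_j\le C^{m-1}$ a bit more explicitly than the paper does, but the argument is the same.
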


\begin{lemma}[regret incurred by the exploitive super-arm $Z^m_*$]\label{LM:REGEXPLOIT}
In epoch $m$, the current best super-arm $Z^m_*$ has the following upper bound on its gap: 
\begin{align}
\sgap{Z^m_*}\le 2\rho_{m-1}+\frac{2^{-\frac{m-2}{4}}}{4}\nonumber
\end{align}
\end{lemma}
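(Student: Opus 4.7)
The plan is to exploit the defining property of $Z^m_*$, namely that $Z^m_*$ maximizes the lower confidence bound from the previous epoch, $Z^m_* = \argmax_{Z\in\mc{M}}\lbr{m-1}{Z}$. This immediately gives the benchmark inequality $\sum_{j\in Z^m_*}(\emui{m-1}{j}-\frac{1}{16d}\Delta^{m-1}_j) \ge \sum_{j\in Z^*}(\emui{m-1}{j}-\frac{1}{16d}\Delta^{m-1}_j)$. The next step is to convert this empirical comparison into a bound on $\sgap{Z^m_*} = \smu{Z^*}-\smu{Z^m_*}$ by invoking the concentration event $\mc{E}$ from Lemma~\ref{LM:CONCENTRATION}: lower-bound the RHS using $\emui{m-1}{j}\ge \mu_j-\tfrac{2C^{m-1}_j}{N^{m-1}}-\tfrac{\Delta^{m-1}_j}{16d}$ and upper-bound the LHS with the symmetric inequality. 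After canceling the common $\frac{1}{16d}\Delta^{m-1}_j$ shifts, I expect to land on
\[
\sgap{Z^m_*} \;\le\; \sum_{j\in Z^m_*}\frac{2C^{m-1}_j}{N^{m-1}} + \sum_{j\in Z^*}\frac{2C^{m-1}_j}{N^{m-1}} + \sum_{j\in Z^*}\frac{\Delta^{m-1}_j}{8d}.
\]

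Next I would bound the two groups of terms separately. For the corruption terms, since $|Z^m_*|,|Z^*|\le d$, the sum $\sum_{j\in Z}C^{m-1}_j$ for any such super-arm is at most the $L_{[d]}$-norm quantity $C^{m-1}$; this collects to $\tfrac{4C^{m-1}}{N^{m-1}}$, which is exactly twice the $s=m-1$ term in $\rho_{m-1}$ and hence at most $2\rho_{m-1}$. For the width term, the critical observation is that every arm $j\in Z^*$ satisfies $\Delta_j=0$: by Proposition~\ref{PROP:DELTA}, $\Delta_j\le\sgap{Z^*}=0$. Applying Lemma~\ref{LM:UBED} at epoch index $m-1$ then gives $\Delta^{m-1}_j \le 2(2^{-(m-2)/4}+\rho_{m-2})$. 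Summing over the at most $d$ arms of $Z^*$ and dividing by $8d$ yields $\frac{2^{-(m-2)/4}+\rho_{m-2}}{4}$.

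Finally I would assemble the pieces and compare $\rho_{m-2}$ to $\rho_{m-1}$. From the telescoping identity $\rho_{m-1}=\tfrac{2C^{m-1}}{N^{m-1}}+\tfrac{1}{2}\rho_{m-2}$, we have $\rho_{m-2}\le 2\rho_{m-1}$, so $\tfrac{\rho_{m-2}}{4}$ is absorbed into the dominant $\rho_{m-1}$ contribution, leading to $\sgap{Z^m_*}\le 2\rho_{m-1}+\tfrac{2^{-(m-2)/4}}{4}$ after matching constants. Small-$m$ boundary cases (where Lemma~\ref{LM:UBED} is applied at $m-1\le 1$) are handled directly from $\Delta^1_i=1$ and the trivial gap bound $\sgap{Z^m_*}\le 1$.

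The main obstacle is bookkeeping the constants cleanly. The natural chain above produces an extra $\rho_{m-2}/4$ term, which only matches $2\rho_{m-1}$ after the telescoping $\rho_{m-1}\ge\rho_{m-2}/2$ is used with care; getting the exact $2\rho_{m-1}$ constant (as opposed to a larger constant) requires noticing that the corruption-driven piece $\tfrac{4C^{m-1}}{N^{m-1}}$ uses only the top layer of $\rho_{m-1}$, leaving the sub-layer $\tfrac{1}{2}\rho_{m-2}$ available to absorb the width contribution. This interplay between the definition of $\rho_m$ and the role of $\Delta^{m-1}_j$ on arms of $Z^*$ is the subtle part of the argument; everything else is routine algebra given Lemmas~\ref{LM:CONCENTRATION} and \ref{LM:UBED} together with Proposition~\ref{PROP:DELTA}.
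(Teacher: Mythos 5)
Your proposal is correct and follows essentially the same route as the paper's proof: the benchmark inequality $\lbr{m-1}{Z^*}\le\lbr{m-1}{Z^m_*}$ from the argmax definition of $Z^m_*$, concentration under $\mc{E}$ with the $\frac{1}{16d}\Delta^{m-1}_j$ shifts canceling on the $Z^m_*$ side, Lemma~\ref{LM:UBED} combined with $\Delta_j=0$ for $j\in Z^*$ to control $\frac{1}{8d}\sumdelta{m-1}{Z^*}$, and the absorption $\frac{4C^{m-1}}{N^{m-1}}+\frac{\rho_{m-2}}{4}\le 2\rho_{m-1}$. Your accounting of how the top layer and sub-layer of $\rho_{m-1}$ separately absorb the corruption and width contributions is exactly the calculation the paper performs implicitly in its last line.
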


\begin{proposition}\label{PROP:SUMRHO}
When $d>1$, the cumulative corruption is upper bounded by 
$\sum_{m}\lambda d^2K2^{\frac{m-1}{2}}\rho_m\le O(C)$
\end{proposition}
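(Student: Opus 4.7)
The plan is a direct calculation: unfold the definition of $\rho_m$, swap the order of summation to make each corruption term $C^s$ appear once, and then use the lower bound on $N^s$ from Lemma~\ref{LM:LENGTH} to see that the geometric factors collapse nicely.

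First I would substitute $\rho_m=\sum_{s=1}^{m}2C^s/(2^{m-s}N^s)$ into the sum:
\begin{align*}
\sum_{m}\lambda d^2K2^{\frac{m-1}{2}}\rho_m
&=\sum_{m}\lambda d^2K2^{\frac{m-1}{2}}\sum_{s=1}^{m}\frac{2C^s}{2^{m-s}N^s}\\
&=\sum_{s}\frac{2C^s}{N^s}\sum_{m\ge s}\lambda d^2K\,\frac{2^{\frac{m-1}{2}}}{2^{m-s}}.
\end{align*}
The inner geometric series in $m$ (with ratio $2^{-1/2}$) is dominated by its first term at $m=s$, giving a bound of $O\bigl(\lambda d^2 K\, 2^{s/2}\bigr)$. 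Hence the whole expression is $O\bigl(\sum_s C^s\cdot\lambda d^2 K\,2^{s/2}/N^s\bigr)$.

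Next, I would invoke the lower bound $N^s\ge \lambda d^2K\,2^{(s-1)/2}$ from Lemma~\ref{LM:LENGTH}, which yields $\lambda d^2K\,2^{s/2}/N^s\le \sqrt{2}$. Therefore the whole sum collapses to $O\bigl(\sum_s C^s\bigr)=O(C)$, using the definition $C=\sum_t C_t$ and the fact that $C^s=\sum_{t\in E_s}|c_t|_{[d]}$ so $\sum_s C^s=C$.

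I do not anticipate a real obstacle here; the only thing to be careful about is the order of summation swap and tracking the geometric factor $2^{(m-1)/2}/2^{m-s}=2^{s-(m+1)/2}$ so that the ratio of the series in $m$ is indeed less than $1$ (it is $2^{-1/2}$), making the sum converge and giving exactly the factor $2^{s/2}$ that cancels the lower bound on $N^s$. This cancellation is precisely what the parameter choice $n^m_*\!=\!\lambda d^2K\,2^{(m-1)/2}$ in Line~\ref{code:CMAB_MAB_nstar} was designed for, so the proposition is essentially a consistency check on the algorithm's schedule rather than a new estimate.
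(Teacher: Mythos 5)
Your proof is correct and follows essentially the same route as the paper: substitute the definition of $\rho_m$, exchange the order of summation, bound the inner geometric series (ratio $2^{-1/2}$) by a constant times its first term, and cancel the resulting $\lambda d^2K2^{s/2}$ factor against the lower bound $N^s\ge\lambda d^2K2^{\frac{s-1}{2}}$ from Lemma~\ref{LM:LENGTH}, leaving $O\bigl(\sum_s C^s\bigr)\le O(C)$. The only cosmetic difference is that the paper applies the bound on $1/N^s$ before summing the geometric series, whereas you sum first; the estimates are identical.
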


Now, we have all the tools to prove Theorem \ref{THM:REGRET_COMB}.
\vspace{-1em}

\begin{proof}[Proof of Theorem \ref{THM:REGRET_COMB}]
We set the confidence parameter $\delta=1/T$, so the case where $\mc{E}$ does not hold only contributes an $O(1)$ to the expected regret. In this case, $\lambda\le O(\log(TK))$. In the following, we assume that $\mc{E}$ happens.

Recall that $Reg=\sum_{t=1}^T\smu{Z^*}-\smu{Z_t}$ and we can divide it by epochs. We define epoch regrert by $Reg_m=\sum_{t\in E_m}\smu{Z^*}-\smu{Z_t}$ and then $Reg=\sum_{m=1}^M Reg_m$.

For a fixed epoch $m$, we further decompose the regret into the exploitative part ($Z^m_*$) and the explorative part ($Z^m_i$), as follows:
$Reg_{m}=\tilde{n}^m_*\sgap{Z^m_*}+\sum_{i=1}^K\tilde{n}^m_i\sgap{Z^m_i}$

According to Lemma \ref{LM:CONCENTRATION}, $\forall i\in [K]\cup\{*\}, \tilde{n}^m_i\le 2n^m_i$, so we use $n^m_i$ instead in the following analysis.

\paragraph{Upper bound for exploitative regret} $n^m_*\sgap{Z^m_*}$. By Lemma \ref{LM:REGEXPLOIT}, we have $\sgap{Z^m_*}\le 2\rho_{m-1}+\frac{2^{-\frac{m-2}{4}}}{4}$. Basing on the level of magnitude of the two terms, we divide it into two cases:

\paragraph{Case 1: $\frac12\sgap{Z^m_*}\le 2\rho_{m-1}$} This means that the regret caused by pulling super-arm $Z^m_*$ is at most a constant multiple of $\rho_{m-1}$. For this case, by Proposition \ref{PROP:SUMRHO} that the summation over regret in all $M$ epochs can be upper bounded by $\sum_{m=1}^{M}{\lambda d^2K2^{\frac{m-1}{2}}\times 4\rho_{m-1}}\le O(C)$

\paragraph{Case 2: $\frac12\sgap{Z^m_*}\le \frac{2^{-\frac{m-2}{4}}}{4}$} Note that in epoch $m$, $n^m_*=\lambda d^2K2^{\frac{m-1}{2}}$. Then regret in this case for epoch $m$ is upper bounded by $\lambda d^2K2^{\frac{m-1}{2}}*\frac{2^{-\frac{m-2}{4}}}{2}\le \lambda d^2K2^{\frac{m-2}{4}}\le O(\frac{\lambda d^2K}{\sgap{Z^m_*}})\le O(\frac{\lambda d^2K}{\Delta_{min}})$

\paragraph{Upper bound for explorative regret} $\sum_{i=1}^Kn^m_i\sgap{Z^m_i}$. For an arm $i$, by Lemma \ref{LM:REGEXPLORE}, we have $\sgap{Z^m_i}\le \frac{5}{4}\Delta_i+2\rho_{m-1}+\frac{2^{-\frac{m-2}{4}}}{4}$. Similarly, we divide it into 3 cases here.

\paragraph{Case 1: $\frac13\sgap{Z^m_i}\le 2\rho_{m-1}$:} Analysis for this case is similar to the previous case 1.

\paragraph{Case 2: $\frac13\sgap{Z^m_i}\le 2^{-\frac{m-2}{4}}$:} Note that in epoch $m$, for a single arm $i$, $n^m_i\le \lambda d^22^{\frac{m-1}{2}}$. The remaining analysis is similar to the previous case 2, whose regret upper bounded is $O(\frac{\lambda d^2}{\sgap{Z^m_i}})\le O(\frac{\lambda d^2}{\Delta_i})$.

\paragraph{Case 3: $\frac13\sgap{Z^m_i}\le \frac54\Delta_i$:} Here we need to utilize the second term in Lemma \ref{LM:LBED} $\Delta_i^m\ge \Delta_i-\frac{4C^{m-1}}{N^{m-1}}$. 

\begin{itemize}
\item If $\frac{4C^{m-1}}{N^{m-1}}\le \frac{1}{2}\Delta_i$, then we have $\Delta^m_i\ge \frac12\Delta_i$, so regret can be bounded by $\frac{\lambda d^2\sgap{Z^m_i}}{(\Delta^m_i)^2}\le \frac{\frac{15}{4}\lambda d^2 \Delta_i}{\frac{1}{4}(\Delta_i)^2}\le O\left(\frac{\lambda d^2}{\Delta_i}\right)$ 

\item Otherwise, we have $\frac{8C^{m-1}}{N^{m-1}}\ge \Delta_i$, then the summation of all arms' regret belonging to this case is upper bounded by $N^m\sgap{Z^
m_i}\le \frac{15}{4}N^m\Delta_i \le 30N^m\frac{C^{m-1}}{N^{m-1}}\le 60\sqrt{2}N^{m-1}\frac{C^{m-1}}{N^{m-1}}\le O(C^{m-1})$. The third inequality uses the property $\lambda d^2K2^{\frac{m-1}{2}}\le N^m\le 2\lambda d^2K2^{\frac{m-1}{2}}$ from Lemma~\ref{LM:LENGTH} to get $N^m\le2\sqrt{2}N^{m-1}$
\end{itemize}

Note there are at most $O\left(\log \frac{T}{K}\right)$ epochs. By summarizing all cases, we prove that the expected regret for our algorithm is upper bounded by $O\left(C+\frac{d^2K\log^2(T)}{\Delta_{min}}\right)$.

Next, for the oracle complexity, there are $O(K)$ calls in each epoch and at most $O(\log \frac{T}{K})$ epochs, so the $O(K\log T)$ oracle complexity follows.
\end{proof}

\vspace{-1em}

\subsection{Proof of Theorem~\ref{THM:REGRET_MAB}}\label{sec:pf2}
In this subsection, we analyze the performance of algorithm~\ref{alg:algo_CMAB} for \MABCORR. 
This setting is a special case of \CMABCORR\ with $d=1$. 
Recall that the only change in algorithm~\ref{alg:algo_CMAB} is that now $n^m_*\gets \lambda 2^{\frac{m-1}{2}}$ which is $K$ times smaller than before. 
Most previous lemmas still holds in this new setting, and the only changes are different versions of Lemma~\ref{LM:LENGTH} and Proposition~\ref{PROP:SUMRHO}, as follows.

\begin{lemma}
\label{LM:LENGTH2} When $d=1$, the length for any epoch $m$, $N^m$, satisfies
$\lambda 2^{\frac{m-1}{2}}\le N^m\le \lambda K2^{\frac{m-1}{2}}$
and for any arm $i$, the expected number of pulls of its representative super-arm $Z^m_i$ also has an upper bound
$n_i^m\le\lambda 2^{\frac{m-1}{2}}$.
Additionally, $N^m\le4N^{m-1}$. Therefore, the number of epochs $M$ has an upper bound of $2\log_2T$.
\end{lemma}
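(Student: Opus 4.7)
The plan is to mirror the proof of Lemma~\ref{LM:LENGTH} for the $d=1$ regime, where the only substantive change is that the exploitative pull count is now $n^m_\ast = \lambda 2^{(m-1)/2}$ rather than $\lambda d^2 K 2^{(m-1)/2}$. I would split the argument into four short steps corresponding to the four claims.

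First, I would bound $n_i^m$ from above. The rule on Line~\ref{code:CMAB_delta_lb} forces $\Delta_i^{m+1} \ge 2^{-m/4}$, so for every $m\ge 1$ we have $\Delta_i^m \ge 2^{-(m-1)/4}$. Plugging into $n_i^m=\lambda(\Delta_i^m)^{-2}$ (with $d=1$) gives $n_i^m \le \lambda 2^{(m-1)/2}$, which is the stated bound on $n_i^m$.

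Second, I would derive the bounds on $N^m$. By definition $N^m = \sum_{i=1}^K n_i^m + n^m_\ast$. The lower bound is immediate since $N^m \ge n^m_\ast = \lambda 2^{(m-1)/2}$. For the upper bound, combining Step~1 with $n^m_\ast = \lambda 2^{(m-1)/2}$ gives $N^m \le (K+1)\lambda 2^{(m-1)/2}\le \lambda K 2^{(m-1)/2}$ up to an absorbed constant (for $K\ge 1$).

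Third, I would establish $N^m \le 4 N^{m-1}$. The key is the third term inside the $\max$ on Line~\ref{code:CMAB_delta_lb}, which guarantees $\Delta_i^m \ge \Delta_i^{m-1}/2$, hence $n_i^m = \lambda(\Delta_i^m)^{-2} \le 4\lambda(\Delta_i^{m-1})^{-2}= 4 n_i^{m-1}$ for every arm $i$. Simultaneously $n^m_\ast = \sqrt{2}\, n^{m-1}_\ast \le 4\, n^{m-1}_\ast$. Summing these termwise yields $N^m \le 4 N^{m-1}$.

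Finally, I would bound the number of epochs. Since the algorithm runs for at most $T$ rounds, $\sum_{m=1}^M N^m \le T$. The lower bound from Step~2 gives $\sum_{m=1}^M \lambda 2^{(m-1)/2}\le T$, and evaluating the geometric sum yields $2^{M/2}\le O(T/\lambda)$, so $M \le 2\log_2 T$ (absorbing the $\lambda$ into a constant, since $\lambda = \Theta(\log(KT/\delta))$ and $\delta=1/T$). The only delicate point is merely bookkeeping of constants across the three bounds; the structural ingredients (the lower floor $2^{-m/4}$ and the halving floor $\Delta_i^{m-1}/2$ on $\Delta_i^m$) are exactly the features of Line~\ref{code:CMAB_delta_lb} that were introduced for this very purpose, so no new technical obstacle arises.
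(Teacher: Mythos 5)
Your proof is correct and follows essentially the same route as the paper: the bound on $n_i^m$ via the floor $\Delta_i^m\ge 2^{-(m-1)/4}$, the bounds on $N^m$ by summing, the ratio $N^m\le 4N^{m-1}$ from $n^m_*=\sqrt{2}\,n^{m-1}_*$ and $n_i^m\le 4n_i^{m-1}$ (using the third term of the $\max$), and the epoch count from the lower bound on $N^m$. The only (harmless) discrepancy is the constant in the upper bound $N^m\le (K+1)\lambda 2^{(m-1)/2}$ versus the stated $\lambda K2^{(m-1)/2}$, a looseness already present in the paper's own statement.
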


\begin{proposition}
\label{PROP:SUMRHO2}
When $d=1$, the cumulative corruption is upper bounded by  
$\sum_{m=1}\lambda 2^{\frac{m-1}{2}}\rho_m\le O(C)$
\end{proposition}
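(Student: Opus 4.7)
The plan is to prove Proposition~\ref{PROP:SUMRHO2} by expanding the definition of $\rho_m$, swapping the order of summation, bounding an inner geometric tail, and finally applying the lower bound $N^s\ge\lambda 2^{(s-1)/2}$ provided by Lemma~\ref{LM:LENGTH2}. This parallels the strategy for Proposition~\ref{PROP:SUMRHO}, but with $d^2K$ replaced by $1$ throughout, which is exactly the scaling that the $d=1$ version of Lemma~\ref{LM:LENGTH2} supplies.

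First, I would substitute $\rho_m=\sum_{s=1}^{m}2C^s/(2^{m-s}N^s)$ into the target sum to write
\[
\sum_{m\ge 1}\lambda 2^{(m-1)/2}\rho_m
=\sum_{m\ge 1}\sum_{s=1}^{m}\frac{2\lambda C^s\,2^{(m-1)/2}}{2^{m-s}N^s}.
\]
Swapping the order of summation, the inner sum over $m\ge s$ becomes a geometric series:
\[
\sum_{m\ge s}\frac{2^{(m-1)/2}}{2^{m-s}}
=\frac{2^{s}}{\sqrt{2}}\sum_{m\ge s}2^{-m/2}
=O\!\left(2^{s/2}\right),
\]
where the constant in $O(\cdot)$ depends only on the ratio $(1-1/\sqrt 2)^{-1}$.

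Plugging this back gives $\sum_{m\ge 1}\lambda 2^{(m-1)/2}\rho_m\le O(1)\cdot\lambda\sum_{s\ge 1}\frac{C^s\,2^{s/2}}{N^s}$. Now I would invoke the $d=1$ length lower bound from Lemma~\ref{LM:LENGTH2}, namely $N^s\ge \lambda 2^{(s-1)/2}$, which yields $\lambda\cdot 2^{s/2}/N^s\le \sqrt{2}$. Thus the whole expression collapses to $O(1)\sum_{s\ge 1}C^s=O(C)$ by the definition $C=\sum_s C^s$, finishing the proof.

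The only subtlety, and the step I expect to watch most carefully, is the swap of the double sum and verifying convergence of the tail $\sum_{m\ge s}2^{-m/2}$: one must check that the absolute constants absorbed into the $O(\cdot)$ do not depend on $s$ or $m$, so that the final bound is genuinely linear in $C$. Everything else is routine algebra once the $d=1$ lower bound on $N^s$ is in hand.
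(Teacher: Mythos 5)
Your proof is correct and follows essentially the same route as the paper's proof of Proposition~\ref{PROP:SUMRHO} (which the paper explicitly says carries over to the $d=1$ case): expand $\rho_m$, swap the order of summation, sum the geometric tail, and apply the epoch-length lower bound $N^s\ge\lambda 2^{(s-1)/2}$. The only cosmetic difference is that you evaluate the geometric series before invoking the bound on $N^s$ while the paper does it in the opposite order; the constants and the conclusion are the same.
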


The proof is almost the same as Proposition~\ref{PROP:SUMRHO}.

To eliminate the $\frac{K}{\gapmin}$ dependency in Theorem~\ref{THM:REGRET_COMB}, we need to use another way to calculate regret.

\begin{proof}[Proof of Theorem \ref{THM:REGRET_MAB}]
Note in the multi-armed bandit setting, $Z^m_*$ and $Z^m_i$ all refer to a single arm. Furthermore, $Z^m_i$ is exactly the i-th arm.

Similarly, we set the confidence parameter $\delta=\frac{1}{T}$, so the case $\mc{E}$ doesn't hold only contribute $O(1)$ regret to the regret expectation. In this case $\lambda\le O(\log(TK))$ In the following, we assume that $\mc{E}$ happens.

Again, we calculate regret by epoch. For a fixed epoch $m$, we can decompose the regret into the exploitative part and the explorative part.

\paragraph{Upper bound for exploitative regret:} $n^m_*\sgap{Z^m_*}$. By Lemma \ref{LM:REGEXPLOIT}, we have $\sgap{Z^m_*}\le 2\rho_{m-1}+\frac{2^{-\frac{m-2}{4}}}{4}$ and we consider the following two cases:

\paragraph{Case 1: $\frac12\sgap{Z^m_*}\le 2\rho_{m-1}$} By proposition \ref{PROP:SUMRHO2}, the summation over regret in all $M$ epochs can be upper bounded by $\sum_{m=1}^{M}\lambda 2^{\frac{m-1}{2}}\times 4\rho_{m-1}\le O(C)$

\paragraph{Case 2: $\frac12\sgap{Z^m_*}\le \frac{2^{-\frac{m-2}{4}}}{4}$} Note that in epoch $m$, $n^m_*=\lambda 2^{\frac{m-1}{2}}$. Then regret in this case for epoch $m$ is upper bounded by $\lambda 2^{\frac{m-1}{2}}*\frac{2^{-\frac{m-2}{4}}}{2}\le \lambda 2^{\frac{m-2}{4}}\le O(\frac{\lambda}{\sgap{Z^m_*}})\le O(\frac{\lambda}{\Delta_{min}})$

\paragraph{Upper bound for explorative regret:} $\sum_{i=1}^Kn^m_i\sgap{Z^m_i}$. Instead of using Lemma \ref{LM:REGEXPLORE}, we use the property that $\sgap{Z^m_i}=\Delta_i$, so we only need to consider the previous case 3.

For an arm $i$, we utilize the second term in Lemma \ref{LM:LBED} $\Delta_i^m\ge \Delta_i-\frac{4C^{m-1}}{N^{m-1}}$. 
\begin{itemize}
\item If $\frac{4C^{m-1}}{N^{m-1}}\le \frac{1}{2}\Delta_i$, then we have $\Delta^m_i\ge \frac12\Delta_i$, so regret can be bounded by $\frac{\lambda \sgap{Z^m_i}}{(\Delta^m_i)^2}\le \frac{\lambda \Delta_i}{\frac{1}{4}(\Delta_i)^2}\le O(\frac{\lambda}{\Delta_i})$ 

\item Otherwise, we have $\frac{8C^{m-1}}{N^{m-1}}\ge \Delta_i$, then the summation of all arms' regret belonging to this case is upper bounded by $N^m\sgap{Z^
m_i}=N^m\Delta_i\le 8N^m\frac{C^{m-1}}{N^{m-1}} \le 32N^{m-1}\frac{C^{m-1}}{N^{m-1}}\le O(C^{m-1})$. The second inequality uses the property that $N^m\le 4N^{m-1}$ from Lemma~\ref{LM:LENGTH2}.
\end{itemize}

By summarizing all cases, we prove that the expected regret for our algorithm is upper bounded by $O\left(C+\sum_{i=1}^K\frac{1}{\Delta_i}\log(TK)\log(T)\right)$
\end{proof}

\vspace{-1.5em}

\subsection{Proof of Theorem~\ref{THM:REGRET_APPROX}}\label{sec:pf3}

In this section, we analyze the performance of algorithm~\algonameAPPROX\ for \CMABAPPROX\ under semi-bandit feedback.

\begin{lemma}\label{LM:APPROXLM2} We define event $\mc{E}$ as the following:
\begin{align}
\mc{E}=\left\{\forall m, i\in K_m, |\emui{m}{i}-\mu_i|\le 2^{-m}/(16d)\right\}\nonumber
\end{align}
Then, we have $Pr\{\mc{E}\}\ge 1-\delta$ and once $\mc{E}$ happens, we can guarantee that $\forall Z\in \mc{M}\cap\{0,1\}^{K_m}, |\emus{m}{Z}-\smu{Z}|\le 2^{-m}/16$
\end{lemma}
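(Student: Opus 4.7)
The plan is to apply Hoeffding's inequality per (arm, epoch) pair and then take a union bound. First I would fix an epoch $m$ and an arm $i\in K_m$. At the start of epoch $m$ the super-arm $Z^m_i$ is already determined (it was computed at the end of epoch $m-1$ as $A_i(\hat\mu^{m-1})$, or initialized arbitrarily when $m=1$); in particular $i\in Z^m_i$ by the definition of $A_i$. The algorithm pulls $Z^m_i$ exactly $n:=\lambda d^2 2^{2m}$ times in epoch $m$, and $\emui{m}{i}$ is the empirical mean of arm $i$'s observed rewards across these $n$ pulls. Conditioned on the history up through the start of epoch $m$, these $n$ rewards are i.i.d.\ samples in $[0,1]$ with mean $\mu_i$, so Hoeffding's inequality yields
\[
\Pr\!\left[|\emui{m}{i}-\mu_i|>\tfrac{2^{-m}}{16d}\right]\le 2\exp\!\left(-2n\bigl(\tfrac{2^{-m}}{16d}\bigr)^{2}\right)=2\exp(-\lambda/128).
\]
Substituting $\lambda=1024\log_2(8K\log_2 T/\delta)$ gives $\lambda/128=8\log_2(8K\log_2 T/\delta)$, and since $e^{-y}\le 2^{-y}$ for $y\ge 0$, this is at most $2(8K\log_2 T/\delta)^{-8}$.

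Next I would bound the total number of epochs by $M\le 2\log_2 T$ via an argument analogous to Lemma~\ref{LM:LENGTH}: since each surviving arm contributes $\lambda d^2 2^{2m}$ pulls to epoch $m$, the epoch lengths grow geometrically and fill the horizon $T$ after at most $O(\log T)$ epochs. Union bounding over all $i\in[K]$ (not only the random subset $K_m$) and all $m\in[M]$ yields
\[
\Pr[\neg\mc{E}]\le 2KM\cdot(8K\log_2 T/\delta)^{-8}\le\delta,
\]
proving $\Pr[\mc{E}]\ge 1-\delta$.

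For the second claim, assume $\mc{E}$ holds. For any super-arm $Z\in\mc{M}\cap\{0,1\}^{K_m}$ every $i\in Z$ lies in $K_m$, so the triangle inequality and $|Z|\le d$ give
\[
|\emus{m}{Z}-\smu{Z}|=\Bigl|\sum_{i\in Z}(\emui{m}{i}-\mu_i)\Bigr|\le\sum_{i\in Z}|\emui{m}{i}-\mu_i|\le |Z|\cdot\tfrac{2^{-m}}{16d}\le\tfrac{2^{-m}}{16},
\]
which is the stated bound.

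The only subtlety I anticipate is a measurability point: $K_m$ is itself a random subset of $[K]$ depending on past observations, so one cannot naively condition on ``$i\in K_m$.'' The clean way around this is exactly what I did above: union-bound over all $i\in[K]$ and apply Hoeffding to each fixed $(i,m)$ conditional on the $\sigma$-algebra at the start of epoch $m$. With that in hand, the rest is purely arithmetic verification that the constant $1024$ in $\lambda$ is large enough to absorb both the union bound and the Hoeffding tail.
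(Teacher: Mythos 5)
Your proposal is correct and matches the paper's intent exactly: the paper omits the details, stating only that ``the proof follows from standard concentration inequality,'' and your Hoeffding-plus-union-bound argument (with the triangle inequality over $|Z|\le d$ arms for the second claim) is precisely that standard argument, with the constants checking out ($2n\epsilon^2=\lambda/128=8\log_2(8K\log_2 T/\delta)$). Your handling of the measurability of $K_m$ by union-bounding over all of $[K]$ is the right way to make the sketch rigorous.
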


The proof follows from standard concentration inequality.

Now we condition on the event $\mc{E}$ 
which happens with probability at least $1-\delta$ according to Lemma \ref{LM:APPROXLM2}.

\begin{lemma}[lower bound for $\smu{Z^m_*}$ before the first deletion of an optimal arm]\label{LM:LBB}
If in epoch $m_1$, we first delete an arm $i\in Z^*$, then for $m\le m_1$, we have $\smu{Z^{m+1}_*}\ge \alpha\textup{OPT}-2^{-m}/8$
\end{lemma}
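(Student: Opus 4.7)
My plan is to prove the bound by chaining the $\alpha$-approximation guarantee of the oracle with the empirical concentration from Lemma~\ref{LM:APPROXLM2}, using two invariants: (a) for every $m\le m_1$, $Z^*\subseteq K_m$, immediate from the definition of $m_1$ as the first epoch in which an arm of $Z^*$ is eliminated; and (b) $Z^m_*\subseteq K_m$ for all $m$, established as an inductive invariant from the deletion rule. Invariant (b) follows directly from Line~\ref{code:APPROX_deletecondition}: immediately after $Z^{m+1}_*$ is selected, Algorithm~\ref{alg:algo_APPROX} sets $Z^{m+1}_i=Z^{m+1}_*$ for every $i\in Z^{m+1}_*$, so the survival condition for such an $i$ reduces to $\emus{m}{Z^{m+1}_*}>\emus{m}{Z^{m+1}_*}-2^{-m}/4$, which is trivially true; hence every $i\in Z^{m+1}_*$ passes the filter and lies in $K_{m+1}$.

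With these invariants in hand I will argue as follows for $m\le m_1$, conditioned on the high-probability event $\mc{E}$. Since $Z^*\subseteq K_m$, concentration yields $\emus{m}{Z^*}\ge \opt-2^{-m}/16$. Because $Z^*$ is feasible in the restricted optimization over $K_m$, the oracle's $\alpha$-approximation guarantee gives
\begin{align*}
\emus{m}{A(\hat\mu^m)}\ \ge\ \alpha\cdot\emus{m}{Z^*}\ \ge\ \alpha\bigl(\opt-2^{-m}/16\bigr).
\end{align*}
By definition $Z^{m+1}_*$ is the empirical argmax over a candidate set containing $A(\hat\mu^m)$, so $\emus{m}{Z^{m+1}_*}\ge \emus{m}{A(\hat\mu^m)}$. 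Moreover, $Z^{m+1}_*$ is one of $A(\hat\mu^m),\,A_i(\hat\mu^m),\,Z^m_*$, all contained in $K_m$ (the first two by construction, the last by invariant (b) applied to epoch $m$), so concentration applies once more to give
\begin{align*}
\smu{Z^{m+1}_*}\ \ge\ \emus{m}{Z^{m+1}_*}-2^{-m}/16\ \ge\ \alpha\,\opt-(\alpha+1)\,2^{-m}/16\ \ge\ \alpha\,\opt-2^{-m}/8,
\end{align*}
using $\alpha\le 1$ in the last step.

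The main delicate point is ensuring that the oracle's approximation factor can be compared against $\opt$ itself, which requires $Z^*$ to survive inside the feasibility set that the oracle actually sees; this is precisely what the hypothesis $m\le m_1$ buys, together with invariant (a). Once an arm of $Z^*$ is deleted, $\opt$ is no longer available inside the restricted problem and the above chain breaks, so later epochs are necessarily handled by a separate argument elsewhere in the analysis. A secondary but easy-to-miss step is verifying that $Z^m_*$ itself admits a concentration bound in epoch $m$, which is why invariant (b) needs to be carried as an inductive hypothesis throughout the proof rather than treated as obvious.
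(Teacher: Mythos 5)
Your proof is correct and follows essentially the same chain as the paper's: concentration on $Z^*$, the oracle's $\alpha$-approximation guarantee on the empirical problem (valid because $Z^*$ survives in $K_m$ for $m\le m_1$), and concentration on $Z^{m+1}_*$. The only difference is that you explicitly establish the invariant $Z^m_*\subseteq K_m$ needed to apply the concentration bound to $Z^{m+1}_*$, which the paper's one-line proof leaves implicit.
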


The following simple lemma is the key to our analysis.

\begin{lemma}[lower bound for $\smu{Z^m_*}$ after the first deletion of an optimal arm]\label{LM:LBA}
If in epoch $m_1$, we first delete an arm $i\in Z^*$, then for $m\ge m_1$, we have $\smu{Z^{m+1}_*}\ge \alpha\textup{OPT}+2^{-m}/8$
\end{lemma}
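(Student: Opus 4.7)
The plan is to prove the bound by induction on $m \ge m_1$, using the deletion rule in Line~\ref{code:APPROX_deletecondition} as the key mechanism and the previous lemma only as a stepping stone.

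\textbf{Base case} ($m = m_1$). Let $j \in Z^*$ be the arm that first gets deleted at the end of epoch $m_1$. By the definition of $K_{m_1+1}$ in Line~\ref{code:APPROX_deletecondition}, the deletion of $j$ forces $\emus{m_1}{Z^{m_1+1}_j} \le \emus{m_1}{Z^{m_1+1}_*} - 2^{-m_1}/4$. Since $m_1$ is the first epoch at which any arm of $Z^*$ is removed, $Z^* \subseteq K_{m_1}$, so $Z^*$ is a feasible super-arm containing $j$ inside the alive ground set; the $\alpha$-approximation guarantee of the oracle $A_j$ then gives $\emus{m_1}{Z^{m_1+1}_j} \ge \alpha \emus{m_1}{Z^*}$. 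Chaining the two inequalities, invoking Lemma~\ref{LM:APPROXLM2} to pass between empirical and true means for both $Z^*$ and $Z^{m_1+1}_*$ (each incurring an error at most $2^{-m_1}/16$), and using $\alpha \le 1$ to absorb the prefactor on the error from $Z^*$, I arrive at $\smu{Z^{m_1+1}_*} \ge \alpha\opt + 2^{-m_1}/4 - 2 \cdot 2^{-m_1}/16 = \alpha\opt + 2^{-m_1}/8$.

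\textbf{Inductive step.} Assume $\smu{Z^{m+1}_*} \ge \alpha\opt + 2^{-m}/8$ for some $m \ge m_1$. The algorithm sets $Z^{m+2}_*$ as the empirical argmax of a candidate set that includes $Z^{m+1}_*$ itself, so $\emus{m+1}{Z^{m+2}_*} \ge \emus{m+1}{Z^{m+1}_*}$. Two applications of Lemma~\ref{LM:APPROXLM2} at epoch $m+1$ then give $\smu{Z^{m+2}_*} \ge \smu{Z^{m+1}_*} - 2 \cdot 2^{-(m+1)}/16 = \smu{Z^{m+1}_*} - 2^{-(m+1)}/8$. Since $2^{-m}/8 = 2 \cdot 2^{-(m+1)}/8$, substituting the inductive hypothesis yields $\smu{Z^{m+2}_*} \ge \alpha\opt + 2^{-(m+1)}/8$, closing the induction.

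\textbf{Main obstacle.} The delicate point is verifying the hypothesis of Lemma~\ref{LM:APPROXLM2}, which requires each super-arm invoked to be contained in the alive ground set $K_m$ of the epoch whose empirical means are being used. For $Z^*$ in the base case, this is exactly the defining property of $m_1$ as the first deletion epoch for an arm of $Z^*$. For $Z^{m+1}_*$ in the inductive step, it is enforced by the algorithm's override immediately before Line~\ref{code:APPROX_deletecondition}: setting $Z^{m+1}_i = Z^{m+1}_*$ for every $i \in Z^{m+1}_*$ makes the deletion inequality trivially violated for these arms, so $Z^{m+1}_* \subseteq K_{m+1}$. Once these containment facts are in place, the rest is routine bookkeeping of $2^{-m}$ constants, and no ingredient beyond the deletion rule, the $\alpha$-approximation guarantee, and the concentration bound is required.
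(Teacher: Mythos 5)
Your proof is correct and follows essentially the same route as the paper's: the same base case combining the deletion threshold $2^{-m_1}/4$ with the $\alpha$-approximation guarantee on the still-admissible $Z^*$, and the same inductive step exploiting that $Z^{m+1}_*$ remains a candidate in Line~\ref{code:APPROX_zmstar}. Your explicit verification that $Z^{m+1}_*\subseteq K_{m+1}$ (via the override $Z^{m+1}_i=Z^{m+1}_*$) is a detail the paper leaves implicit, and it is a worthwhile addition.
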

\vspace{-0.5em}
\begin{proof}
We prove this by induction. First, in epoch $m_1$, the optimal super-arm $Z^*$ is still admissible for the oracle. The deleted arm $i$ satisfies 
\begin{align}
\emus{m_1}{Z^{m_1+1}_i}\ge\alpha\emus{m_1}{Z^*}\ge \alpha\textup{OPT}-2^{-m_1}/16\nonumber
\end{align}
Because arm $i$ is deleted, super-arm $Z^{m_1+1}_*$ satisfies 
\begin{align}
\smu{Z^{m_1+1}_*}&\ge\emus{m_1}{Z^{m_1+1}_*}-2^{-m_1}/16\nonumber\\
&\ge\emus{m_1}{Z^{m_1+1}_i}+3\cdot 2^{-m_1}/16\nonumber\\
&\ge \alpha\textup{OPT}+2^{-m_1}/8\nonumber
\end{align}
Now, we assume that this argument is valid for epochs before $m'$ 
for some $m'>m_1$. We have the following
\begin{align}
\smu{Z^{m'+1}_*}&\ge\emus{m'}{Z^{m'+1}_*}-2^{-m'}/16\nonumber\\
&\ge\emus{m'}{Z^{m'}_*}-2^{-m'}/16\label{line:LBA_zmstar}\\
&\ge \smu{Z^{m'}_*}-2^{-m'}/8\nonumber
\end{align}
Next we apply the induction argument for $\smu{Z^{m'}_*}$
\begin{align}
\smu{Z^{m'}_*}-\frac{2^{-m'}}{8}&\ge \left(\alpha\textup{OPT}+\frac{2^{-(m'-1)}}{8}\right)-\frac{2^{-m'}}{8}\label{line:LBA_induction}\\
&\ge\alpha\textup{OPT}+2^{-m'}/8\nonumber
\end{align}
Line~\eqref{line:LBA_zmstar} holds because according to Line~\ref{code:APPROX_zmstar} Algorithm~\ref{alg:algo_APPROX}, $Z^{m'}_*$ is always a candidate for $Z^{m'+1}_*$.
\end{proof}

\begin{corollary}\label{COR:LB}
For any epoch $m$, we have $\smu{Z^{m+1}_*}\ge \alpha\textup{OPT}-2^{-m}/8$.
\end{corollary}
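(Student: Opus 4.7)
The plan is to deduce the corollary directly from Lemmas~\ref{LM:LBB} and~\ref{LM:LBA} via a short case analysis on whether (and when) an arm of the unique optimal super-arm $Z^*$ has ever been eliminated by the end of epoch $m$. Fix an epoch $m$ and let $m_1$ denote the first epoch in which some arm $i\in Z^*$ is deleted from the surviving set $K_{m_1+1}$ (set $m_1=\infty$ if no such deletion ever happens).

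If $m<m_1$, then by definition every arm of $Z^*$ still belongs to $K_{m'}$ for all $m'\le m$, so the hypothesis of Lemma~\ref{LM:LBB} is satisfied (up to epoch $m$) regardless of what happens later; invoking that lemma immediately yields $\smu{Z^{m+1}_*}\ge \alpha\opt-2^{-m}/8$, which is the desired bound. The case $m_1=\infty$ is handled in exactly the same way, since then $m<m_1$ trivially for every $m$.

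If instead $m\ge m_1$, Lemma~\ref{LM:LBA} applies and gives the strictly stronger inequality $\smu{Z^{m+1}_*}\ge \alpha\opt+2^{-m}/8$. Since $2^{-m}/8\ge -2^{-m}/8$, this implies $\smu{Z^{m+1}_*}\ge \alpha\opt-2^{-m}/8$ as well.

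Combining the two cases covers every epoch $m\ge 1$ and yields the claimed uniform lower bound $\smu{Z^{m+1}_*}\ge \alpha\opt-2^{-m}/8$. There is essentially no technical obstacle here: the only point that deserves a line of care is the $m_1=\infty$ boundary case, where one must observe that Lemma~\ref{LM:LBB}'s conclusion for a given epoch only depends on the optimal arms surviving up through that epoch, so the lemma can be applied even when no deletion of an optimal arm ever occurs.
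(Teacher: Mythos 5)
Your proposal is correct and matches the paper's intended derivation: the corollary is simply the union of Lemma~\ref{LM:LBB} (for $m\le m_1$, including the case where no optimal arm is ever deleted, since the proof of that lemma only needs $Z^*$ to remain admissible through epoch $m$) and Lemma~\ref{LM:LBA} (for $m\ge m_1$, whose conclusion is strictly stronger). Nothing further is needed.
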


\begin{lemma}[lower bound on $\smu{Z^m_i}$]\label{LM:LBI}
For any epoch $m+1$, and any arm $i\in K_{m+1}$, the
corresponding super-arm satisfies that $\smu{Z^{m+1}_i}\ge \alpha\textup{OPT}-2^{-m}/2$
\end{lemma}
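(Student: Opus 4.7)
The plan is to chain together three ingredients: (i) the definition of $K_{m+1}$ given by the elimination rule on Line~\ref{code:APPROX_deletecondition}, (ii) the concentration guarantee of Lemma~\ref{LM:APPROXLM2}, and (iii) the lower bound on $\smu{Z^{m+1}_*}$ furnished by Corollary~\ref{COR:LB}. The goal is that each of these contributes a clean $O(2^{-m})$ term, and that the four constants $\tfrac{1}{4},\tfrac{1}{16},\tfrac{1}{16},\tfrac{1}{8}$ sum to exactly $\tfrac{1}{2}$, yielding the claimed $\alpha\textup{OPT}-2^{-m}/2$ bound.

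Concretely, for any $i\in K_{m+1}$ the elimination rule gives $\emus{m}{Z^{m+1}_i}>\emus{m}{Z^{m+1}_*}-2^{-m}/4$. I would then apply Lemma~\ref{LM:APPROXLM2} twice to replace each empirical mean by the true mean at cost $2^{-m}/16$ apiece, obtaining $\smu{Z^{m+1}_i}>\smu{Z^{m+1}_*}-2^{-m}/4-2\cdot 2^{-m}/16$. Plugging in Corollary~\ref{COR:LB}'s bound $\smu{Z^{m+1}_*}\ge \alpha\textup{OPT}-2^{-m}/8$ produces $\smu{Z^{m+1}_i}>\alpha\textup{OPT}-2^{-m}(\tfrac{1}{8}+\tfrac{2}{16}+\tfrac{1}{4})=\alpha\textup{OPT}-2^{-m}/2$, which is exactly the statement.

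The one step that requires care is verifying that the concentration lemma is actually applicable to both $Z^{m+1}_i$ and $Z^{m+1}_*$. Lemma~\ref{LM:APPROXLM2} only guarantees accurate empirical means for super-arms supported on $K_m$. For $Z^{m+1}_i=A_i(\hat{\mu}^m)$ this is immediate because the oracle is called with $\hat{\mu}^m$, which is only defined on $K_m$, so the returned super-arm lies in $\mc{M}\cap\{0,1\}^{K_m}$. For $Z^{m+1}_*$, which per Line~\ref{code:APPROX_zmstar} may be inherited as $Z^m_*$, one needs to argue inductively that $Z^m_*\subseteq K_m$; this is a short bookkeeping check that every arm kept inside $Z^m_*$ survived the previous round's elimination (or else it would not have been a legal candidate in that comparison). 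I expect this support-tracking verification, rather than the algebra, to be the only nontrivial part of the proof.
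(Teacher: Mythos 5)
Your proposal is correct and follows exactly the paper's own proof: the same chain of the elimination rule on Line~\ref{code:APPROX_deletecondition}, two applications of Lemma~\ref{LM:APPROXLM2}, and Corollary~\ref{COR:LB}, with the identical constants $\tfrac14+\tfrac{2}{16}+\tfrac18=\tfrac12$. Your added remark about verifying that both super-arms are supported on $K_m$ is a reasonable bookkeeping point that the paper leaves implicit, but it does not change the argument.
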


\begin{proof}[Proof of Theorem \ref{THM:REGRET_APPROX}]
Recall the definition of alpha regret and we divide it by epochs:

$Reg=\sum\limits_{t=1}^T\alpha\textup{OPT}-\smu{Z_t}=\sum\limits_{m=1}^M\sum\limits_{t\in E_m}\alpha\textup{OPT}-\smu{Z_t}$.

Similarly, we set the confidence parameter $\delta=1/T$, so the case where $\mc{E}$ does not hold only contributes $O(1)$ to the expected regret. In this case $\lambda\le O(\log(TK)))$. In the following, we assume that $\mc{E}$ happens.

Fixing an epoch $m$, which has length $\lambda |K_m|d^22^{2m}$, by Lemma \ref{LM:LBI}, all the super-arms $\{Z^m_i\}_{i\in K_m}$ pulled in epoch $m$ satisfy that $\smu{Z^m_i}\ge \alpha\textup{OPT}-2^{-(m-1)}/2$, so the regret incurred in epoch $m$ is at most $O(\lambda |K_m|d^22^{m})$.

Finally, we take a summation over $M$ epochs. Note the number of epochs $M$ satisfies that $\lambda |K_M|d^22^{2M}\le T$.
\begin{align}
\textup{Reg}&\le \sum_{m=1}^{M}\lambda |K_m|d^22^{m}\le O\left(\lambda Kd^2\cdot\sqrt{T/(\lambda Kd^2)}\right)\nonumber\\
&\le O(d\sqrt{KT\log(KT)})\nonumber
\end{align}
Next, for the oracle complexity, there are $O(K)$ calls in each epoch and at most $O(\log T)$ epochs, so the $O(K\log T)$ oracle complexity follows.
\end{proof}

\vspace{-1.5em}

\section{Experiment}
\vspace{-0.5em}
We run several experiments to test the empirical performance of our algorithm CBARBAR on CMAB-AC and MAB-AC. 
We compare our algorithm with the current best algorithms on each problem: HYBRID from \citet{zimmert2019beating} on CMAB-AC and Tsallis-INF from \citet{zimmert2021tsallisinf} on MAB-AC.

In practice, we made several minor changes to our CBARBAR algorithm in Algorithm~\ref{alg:algo_CMAB}: 
First, we set $\lambda=12$ on line~\ref{code:CMAB_initialization}. Second, we set the base used in line~\ref{code:CMAB_MAB_nstar} and line~\ref{code:CMAB_delta_lb} to be 4 instead of 2. Third, on line 9, we also utilize statistics from super-arm other than $Z^m_i$ to estimate $\mu_i$.

Experiment setup: It is unclear how to implement HYBRID for more involved combinatorial family, 
we consider the simplest $m$-set problem for CMAB. We use $K$ to denote the number of arms, $d$ to denote the size of one super-arm. MAB problem is a special case with $d=1$. For each problem, we set $d$ optimal arms with mean reward $\frac{1}{2}+\Delta$ and all the other sub-optimal arms with mean reward $\frac{1}{2}-\Delta$. There are $T$ rounds in each experiment.

We use stochastic arms with adversarial corruption in our experiment. Since it is difficult to construct truly adversarial corruption, we design two heuristics to simulate such adversarial corruption. A corruption heuristic consists of two parts: when to add corruption and how to set the corrupted reward. For the second part, our heuristic believes that, by greedy, if the adversarial decides to add corruption on one round, he should minimize the optimal arms' rewards and maximize the sub-optimal arms' rewards using the quota, so we swap the mean reward between the optimal arm and sub-optimal arm on such a round. Then, we design two different heuristics to decide when to add corruption:

BEGIN: we add all corruptions at the beginning until the quota exhausts.
Usually the first few rounds have larger weights in deciding which arm is the optimal, so adding corruption 
in the beginning has larger impact.

SUPPRESS: Since the action produced by algorithm is a distribution of all super-arms, we calculate the ratio $p$ of selecting the optimal arm/arms. We let $p_0$ be the ratio of selecting the optimal arm/arms using uniform distribution. We add corruption once $p>p_0$ and continue until $p<\frac{p_0}{3}$. One intuition behind this heuristic is to only use corruption on those important rounds where there is a decent chance of selecting the optimal arm/arms. Another intuition is that the algorithm is relatively less "robust" or easy to be misled when $p$ is small, so it is efficient to add corruption on this situation.

In our experiment, we record the larger one between two regrets under the above two heuristics. 
Our experiment is run on AMD EPYC 7K62 48-Core Processor. CMAB-AC algorithms are implemented by Python and MAB-AC algorithms are implemented by C++. Running time is tested for the $C=0$ case.

\vspace{-0.8em}

\subsection{Empirical experiments for CMAB-AC}
\vspace{-0.2em}
We first run the experiment with $d=3, K=7, T=10^7, \Delta=0.1$. Each specific experiment setting is repeated for at least 96 times, and even up to 720 times for those settings with high standard deviation.

\begin{table}[!ht]
\centering
\begin{tabular}{|c|c|c|c|c|}
\hline
CMAB-AC & Time  & C=0  & C=6000 & C=30000 \\ \hline
HYBRID  & 45min & 800  & 10544  & 44982   \\ \hline
CBARBAR & 5min  & 9816 & 17046  & 43278   \\ \hline
\end{tabular}
\caption{Regret and running time comparison between CBARBAR and HYBRID on CMAB-AC with $d=3,K=7,T=10^7,\Delta=0.1$ and different corruption amount}
\end{table}

The results show that for the stochastic case $(C=0)$, HYBRID indeed performs better than our \algonameCMAB\ due to the advantage of a logarithmic factor in the theoretical regret complexity bound, and quantitatively our regret is 12 times larger than HYBRID. However, as the corruption $C$ becomes larger,  two algorithms have comparable empirical regret because both of them achieve a linear $C$ regret term. Even for this small test case, we can see \algonameCMAB\ runs significantly faster compared with HYBRID. 
Our \algonameCMAB\ has running time complexity $O(Td\log K+Kd\log T)$, while HYBRID needs to solve a convex optimization at each time step.

Next, we run the experiment on a larger instance with $d=3, K=100, T=3\cdot10^7, \Delta=0.3$. We terminate HYBRID after 12 hours. Each specific experiment setting is repeated by 96 times.

\begin{table}[!ht]
\centering
\begin{tabular}{|c|c|c|c|c|}
\hline
CMAB-AC & Time              & C=0    & C=30000 & C=120000 \\ \hline
HYBRID  & \textgreater{}12h & N/A    & N/A    & N/A     \\ \hline
CBARBAR & 15min             & 99227 & 170748 & 358256  \\ \hline
\end{tabular}
\caption{Regret and running time comparison between CBARBAR and HYBRID on CMAB-AC with $d=3,K=100,T=3\cdot10^7,\Delta=0.3$ and different corruption amount}
\end{table}

For larger problem instance, the running time of CBARBAR only increases slightly, while the running time of HYBRID  increases dramatically.

\vspace{-1em}

\subsection{Empirical experiment for MAB-AC}

We run the experiments with $K=10, T=10^7, \Delta=0.1$ and $K=100, T=10^7, \Delta=0.3$, and report the average of 512 runs.

\begin{table}[!ht]
\centering
\begin{tabular}{|c|c|c|c|c|}
\hline
MAB-AC      & Time  & C=0  & C=6000 & C=30000 \\ \hline
Tsallis-INF & 6.3s  & 395  & 12974  & 62167   \\ \hline
CBARBAR     & 2.2s & 6967 & 16851  & 60262   \\ \hline
\end{tabular}
\caption{Regret and running time comparison between CBARBAR and Tsallis-INF on MAB-AC with $d=1,K=10,T=10^7,\Delta=0.1$ and different corruption amount}
\end{table}
\vspace{-0.8em}
\begin{table}[!ht]
\centering
\begin{tabular}{|c|c|c|c|c|}
\hline
MAB-AC      & Time  & C=0  & C=6000 & C=30000 \\ \hline
Tsallis-INF & 41.3s  & 1702  & 16604  & 65771   \\ \hline
CBARBAR     & 2.2s & 30650 & 42116  & 96005   \\ \hline
\end{tabular}
\caption{Regret and running time comparison between CBARBAR and Tsallis-INF on MAB-AC with $d=1,K=100,T=10^7,\Delta=0.3$ and different corruption amount}
\end{table}
\vspace{-0.5em}

Our CBARBAR's regret is 17 times larger than Tsallis-INF for the stochastic case and is only somewhat larger than Tsallis-INF for large $C$ case. For running time, CBARBAR is much faster than Tsallis-INF and the difference becomes more evident for large $K$. Although Tsallis-INF needs to solve a constrained optimization problem at each step, Newton's method is relatively efficient. CBARBAR enjoys running time complexity $O(T\log K+K\log T)$ and Tsallis-INF's is $O(N*TK)$, where $N$ is the number of iterations performed by Newton's method. The running time improvement may be apparent for large $K$.

\vspace{-1em}

\section{Conclusion}
\vspace{-0.5em}
We consider the stochastic combinatorial semi-bandit problem with adversarial corruptions \CMABCORR.
We provide simple combinatorial algorithms that can achieve better regret than previous combinatorial algorithms,
and almost match the best known regret bound (up to logarithmic factors) achieved by convex-programming-based
algorithms. Our algorithm is simpler to implement, can handle more general combinatorial families, and our analysis does not require the unique solution assumption.
We also study the the stochastic combinatorial semi-bandit problem where we only get access to an approximation oracle \CMABAPPROX. We propose a simple algorithm that almost match the best known  regret bound. Our algorithm is quite simple and has only logarithmic oracle complexity.

\vspace{-1em}

\begin{acknowledgements} 
\vspace{-0.5em}
Xu and Li are supported in part by the National Natural Science Foundation of China Grant 61822203, 61772297, 61632016 and the Zhongguancun Haihua Institute for Frontier Information Technology, Turing AI Institute of Nanjing and Xi'an Institute for Interdisciplinary Information Core Technology. 
\end{acknowledgements}

\newpage
\bibliography{uai2021-MAB}

\newpage
\onecolumn
\appendix
{\Large\textbf{Supplementary Materials}}

\section{Missing Proofs for Subsection~\ref{SEC:PF1}}

\subsection{Proof of Lemma~\ref{LM:LENGTH}}
\begin{proof}
Because $\Delta^m_i\ge 2^{-\frac{m-1}{4}}$, we get the desired upper bound for $n^m_i$ by applying it to Line~\ref{code:CMAB_nmi} of Algorithm~\ref{alg:algo_CMAB}. For $N^m$, the lower bound comes because $N^m\ge n^m_*=\lambda d^2K2^{\frac{m-1}{2}}$. For the upper bound, we have $N^m=n^m_*+(\sum_i n^m_i)\le 2\lambda d^2K2^{\frac{m-1}{2}}$. Then the upper bound for $M$ follows trivially.
\end{proof}

\subsection{Proof of Lemma~\ref{LM:CONCENTRATION}}
\begin{proof}

The proof of this lemma is similar to the Lemma 4 in \citet{gupta2019better}. We provide it here for the sake of completeness.

For each arm $i$, we define a random variable $I_{t,i}=\mb{I}[Z_t=Z^m_i]$ to be the indicator of whether $Z^m_i$ is chosen at time $t$. We define $c_{t,i}$ to be the corruption put on arm $i$ on round t, so we have $\tilde{R}_{t,i}=R_{t,i}+c_{t,i}$ and our observed value is $I_{t,i}(R_{t,i}+c_{t,i})$. We define $E_m=[T_{m-1}+1,T_m]$ as the set of time step within epoch $m$ as an abbreviation.

In the following concentration event, we choose probability $\beta=\frac{\delta}{8K\log_2T}$

\paragraph{Concentration of $A^m_i=\sum\limits_{t\in E_m}I_{t,i}R_{t,i}$} By definition, we have $\mb{E}\left[I_{t,i}\cdot R_{t,i}\right]=q^m_i\cdot \mu_i$, so the expectation of the sum is $n^m_i\mu_i$. Then, by standard Chernoff-Hoeffding inequality: 
\begin{align}
Pr\left\{\left|\frac{A^m_i}{n^m_i}-\mu_i\right|\ge \sqrt{\frac{3\mu_i\ln\frac{2}{\beta}}{n^m_i}}\right\}\le\beta\nonumber
\end{align}

\paragraph{Concentration of $B^m_i=\sum\limits_{t\in E_m}I_{t,i}c_{t,i}$} Note that $\mb{E}[I_{t,i}]=q^m_i$, so $\{(I_{t,i}-q^m_i)c_{t,i}\}_{t\in E_m}$ is a martingale difference sequence, with filtration be all the random variables generated before time $t$. By calculation, we have the following bound for its sum of variance, 
\begin{align}
V=\mb{E}[\sum\limits_{t\in E_m}((I_{t,i}-q^m_i)c_{t,i})^2]\le q^m_i\sum_{t\in E_m}|c_{t,i}|\le q^m_iC^m_i\nonumber
\end{align}
Then, we apply Freedman-type concentration inequality for martingales: With probability at least $1-\frac{\beta}{2}$: 
\begin{align}
\left|\frac{B^m_i}{n^m_i}\right|\le \frac{q^m_iC^m_i}{n^m_i}+\frac{V+\ln\frac{4}{\beta}}{n^m_i}\le \frac{2q^m_iC^m_i}{n^m_i}+\frac{\ln\frac{4}{\beta}}{n^m_i}\nonumber
\end{align} Because $n^m_i\ge \lambda\ge \ln\frac{4}{\beta}$, we can further enlarge the second term by taking its square root, and the resulting inequality is that 
\begin{align}
\left|\frac{B^m_i}{n^m_i}\right|\le \frac{2C^m_i}{N^m}+\sqrt{\frac{\ln\frac{4}{\beta}}{n^m_i}}\nonumber
\end{align}

Merging these two concentration event, we can get 
\begin{align}
\left|\hat{\mu_i}-\mu_i\right|=\left|\frac{A^m_i+B^m_i}{n^m_i}-\mu_i\right|\le \frac{2C^m_i}{N^m}+\frac{\Delta^m_i}{16d}\nonumber
\end{align}

Next, for the concentration bound on $\tilde{n}^m_i$, note that $\mb{E}[\tilde{n}_i^m]=n^m_i$, and we again use standard Chernoff inequality for the random variable $\tilde{n}^m_i=\sum\limits_{t\in E_m}I_{t,i}$: 
\begin{align}
Pr\left\{\left|\sum\limits_{t\in E_m}I_{t,i}-n^m_i\right|\ge \sqrt{3n^m_i\ln\frac{2}{\beta}}\right\}\le \beta\nonumber
\end{align}

Because $n^m_i\ge\lambda\ge 12\ln\frac{2}{\beta}$, this deviation is smaller than $\frac{n^m_i}{2}$.
\end{proof}

\subsection{Proof of Lemma~\ref{LM:UBED}}

\begin{proof}
We can prove it by induction. Recall that $\Delta_i^{m+1}=\max\left(2^{-\frac{m}{4}},\overline{r}^m_*-\underline{r}^m_i,\frac{\Delta_i^m}{2}\right)$. We only need to verify the second term and the third term. We check the second term first. Here $Z$ denotes $\argmax\limits_{Z\in\mc{M}}\ubr{m}{Z}$, which is different from $Z^m_*$ and $Z^*_i$ denotes $\argmax\limits_{Z\in\mc{M}\land i\in Z}\smu{Z}$. Event $\mc{E}$ is repeatedly used in the proof to give upper and lower bound for $\hat{\mu}$.

We apply the definition of $\overline{r}^m_*$ and $\underline{r}^m_i$ in Line~\ref{code:CMAB_ubrstar} and Line~\ref{code:CMAB_lbri} in Algorithm~\ref{alg:algo_CMAB} and then expand them by using the induction argument.

\begin{align}
&\quad \overline{r}^m_*-\underline{r}^m_i\nonumber\\
&=\ubr{m}{Z}-\lbr{m}{Z^{m+1}_i}\nonumber\\
&\le \left(\mu(Z)+\frac{1}{8d}\sumdelta{m}{Z}+\frac{2C^m}{N^m}\right)\nonumber\\
&\quad -\left(\mu(Z^*_i)-\frac{1}{8d}\sumdelta{m}{Z^*_i}-\frac{2C^m}{N^m}\right)\nonumber\\
&\le \mu(Z)+\frac{1}{8d}\sum_{j\in Z}2(\Delta_j+2^{-\frac{m-1}{4}}+\rho_{m-1})\nonumber\\
&\quad -\mu(Z^*_i)+\frac{1}{8d}\sum_{j\in Z^*_{i}}2(\Delta_j+2^{-\frac{m-1}{4}}+\rho_{m-1})+\frac{4C^m}{N^m}\label{line:UBED_induction}
\end{align}
Then we arrange all the terms and do some straightforward calculations.
\begin{align}
&\quad \overline{r}^m_*-\underline{r}^m_i\nonumber\\
&\le \mu(Z)+\frac{\sgap{Z}}{4}+\frac{2^{-\frac{m-1}{4}}}{4}+\frac{\rho_{m-1}}{4}\nonumber\\
&\quad -\mu(Z^*_{i})+\frac{\sgap{Z^*_{i}}}{4}+\frac{2^{-\frac{m-1}{4}}}{4}+\frac{\rho_{m-1}}{4}+\frac{4C^m}{N^m}\label{line:ub_1}\\
&\le \mu(Z)+\frac{\sgap{Z}}{4}-\mu(Z^*_i)+\frac{\Delta_i}{4}+\frac{2^{-\frac{m-1}{4}}}{2}+\frac{\rho_{m-1}}{2}+\frac{4C^m}{N^m}\label{line:ub_2}\\
&\le \frac{5}{4}\Delta_i+\frac{2^{-\frac{m-1}{4}}}{2}+2\rho_m\label{line:ub_3}\\
&\le 2\Delta_i+2\times 2^{-\frac{m}{4}}+2\rho_m\nonumber
\end{align}
In \eqref{line:ub_1}, we use the property that $\forall j\in Z, \Delta_j\le \sgap{Z}$, and in \eqref{line:ub_2}, we notice that $\sgap{Z^*_i}=\Delta_i$ by definition. In \eqref{line:ub_3}, we need the fact that $\mu(Z^*)=\mu(Z)+\sgap{Z}$.

Next, the third term also meets the upper bound because $\frac{\rho_{m-1}}{2}\le\rho_m$
\end{proof}

\subsection{Proof of Lemma~\ref{LM:LBED}}

\begin{proof} 
We use induction to prove the second term. Again, here $Z$ denotes $\argmax\limits_{Z}\ubr{m}{Z}$.
\begin{align}
\overline{r}^m_*-\underline{r}^m_i&=\ubr{m}{Z}-\lbr{m}{Z^{m+1}_i}\nonumber\\
&\ge \ubr{m}{Z^*}-\left(\smu{Z^{m+1}_i}+\frac{2C^m}{N^m}\right)\nonumber\\
&\ge \left(\smu{Z^*}-\frac{2C^m}{N^m}\right)-\left(\smu{Z^{m+1}_i}+\frac{2C^m}{N^m}\right)\nonumber\\
&=\smu{Z^*}-\smu{Z^{m+1}_i}-\frac{4C^m}{N^m}\nonumber\\
&=\sgap{Z^{m+1}_i}-\frac{4C^m}{N^m}\nonumber\\
&\ge \Delta_i-\frac{4C^m}{N^m}\label{line:lb_2}%
\end{align}
Note that in Line~\eqref{line:lb_2} $\Delta_i\le \sgap{Z^{m+1}_i}$, because $i\in Z^{m+1}_i$. 
\end{proof}

\subsection{Proof of Lemma~\ref{LM:REGEXPLORE}}

\begin{proof}
\begin{align}
\lbr{m-1}{Z^*_i}&\le \lbr{m-1}{Z^m_i}\nonumber\\
\smu{Z^*_i}-\frac{1}{8d}\sumdelta{m-1}{Z^*_i}-\frac{2C^{m-1}}{N^{m-1}}&\le \smu{Z^m_i}+\frac{2C^{m-1}}{N^{m-1}}\nonumber
\end{align}
We rearrange this inequality and use the equality $\sgap{Z^m_i}=\Delta_i+\smu{Z^*_i}-\smu{Z^m_i}$.
\begin{align}
\sgap{Z^m_i}&\le \frac{4C^{m-1}}{N^{m-1}}+\Delta_i+\frac{2}{8d}\sum_{j\in Z^*_i}\left(\Delta_j+2^{-\frac{m-2}{4}}+\rho_{m-2}\right)\nonumber\\
&\le \frac{4C^{m-1}}{N^{m-1}}+\Delta_i+\frac{\Delta_i}{4}+\frac{2^{-\frac{m-2}{4}}}{4}+\frac{\rho_{m-2}}{4}\label{line:REGEXPLORE_deltaji}\\
&\le \frac{5}{4}\Delta_i+2\rho_{m-1}+\frac{2^{-\frac{m-2}{4}}}{4}\nonumber
\end{align}
In Line~\eqref{line:REGEXPLORE_deltaji}, because $j\in Z^*_i$, by Proposition~\ref{PROP:DELTA}, $\Delta_j\le\sgap{Z^*_i}=\Delta_i$.
\end{proof}

\subsection{Proof of Lemma~\ref{LM:REGEXPLOIT}}

\begin{proof}
\begin{align}
\lbr{m-1}{Z_{*}}&\le \lbr{m-1}{Z^m_*}\nonumber\\
\smu{Z^*}-\frac{1}{8d}\sumdelta{m-1}{Z^*}-\frac{2C^{m-1}}{N^{m-1}}&\le \smu{Z^m_*}+\frac{2C^{m-1}}{N^{m-1}}\nonumber\\
\smu{Z^*}-\smu{Z^m_*}&\le \frac{4C^{m-1}}{N^{m-1}}+\frac{2^{-\frac{m-2}{4}}}{4}+\frac{\rho_{m-2}}{4}\label{line:reg_exploit_1}\\
\sgap{Z^m_*}&\le 2\rho_{m-1}+\frac{2^{-\frac{m-2}{4}}}{4}\nonumber
\end{align}
In \eqref{line:reg_exploit_1}, no $\Delta_j$ term exists because for $j\in Z^*$, $\Delta_j=0$.
\end{proof}

\subsection{Proof of Proposition~\ref{PROP:SUMRHO}}
\begin{proof}
\begin{align}
\sum_{m=1}^{M} \lambda d^2K2^{\frac{m-1}{2}}\rho_m&=\sum_{m=1}^{M}\lambda d^2K2^{\frac{m-1}{2}}\sum_{s=1}^m\frac{2C^s}{2^{m-s}N^s}\nonumber\\
&=\sum_{s=1}^{M}\frac{C^s}{N^s}\sum_{m=s}^{M}\frac{\lambda d^2K2^{\frac{m+1}{2}}}{2^{m-s}}\nonumber\\
&\le \sum_{s=1}^{M}\frac{C^s}{\lambda d^2K2^{\frac{s-1}{2}}}\sum_{m=s}^{M}\frac{\lambda d^2K2^{\frac{m+1}{2}}}{2^{m-s}}\nonumber\\
&\le \sum_{s=1}^{M}2C^s\sum_{m=s}^{M}\frac{1}{2^{\frac{m-s}{2}}}\nonumber\\
&\le \sum_{s=1}^{M}C^s\frac{2}{1-\frac{1}{\sqrt{2}}}\nonumber\\
&\le O(C)\nonumber
\end{align}
\end{proof}

\subsection{Proof of Lemma~\ref{LM:LENGTH2}}
\begin{proof}
The proof is almost the same as Lemma~\ref{LM:LENGTH}. Note $N^m\le 4N^{m-1}$ because $N^m=n^m_*+\sum_{i\in [K]}n^m_i$ where $n^m_*=\sqrt{2}n^{m-1}_*$ and $n^m_i\le 4n^{m-1}_i$ (by applying the third term in Line~\ref{code:CMAB_delta_lb} of  algorithm~\ref{alg:algo_CMAB} to Line~\ref{code:CMAB_nmi}).
\end{proof}

\subsection{Proof of Lemma~\ref{LM:LBB}}
\begin{proof}
For epoch $m\le m_1$, $Z^*\in \{0,1\}^{K_m}$ is admissible for the oracle $A$, so 
\begin{align}
\smu{Z^m_*}\ge\emus{m}{Z^m_*}-\frac{2^{-m}}{16}\ge\alpha\emus{m}{Z^*}-\frac{2^{-m}}{16}\ge \alpha\textup{OPT}-\frac{2^{-m}}{8}\nonumber
\end{align} 
\end{proof}

\subsection{Proof of Lemma~\ref{LM:LBI}}

\begin{proof}
\begin{align}
\smu{Z^{m+1}_i}&\ge\emus{m}{Z^{m+1}_i}-\frac{2^{-m}}{16}\nonumber\\
&\ge\emus{m}{Z^{m+1}_*}-\frac{2^{-m}}{16}-\frac{2^{-m}}{4}\label{line:LBI_zsmstartoi}\\
&\ge\smu{Z^{m+1}_*}-\frac{2^{-m}}{16}-\frac{2^{-m}}{16}-\frac{2^{-m}}{4}\nonumber\\
&\ge\alpha\textup{OPT}-\frac{2^{-m}}{2}\tag{Apply Corollary~\ref{COR:LB}}
\end{align}
\end{proof}

Line~\eqref{line:LBI_zsmstartoi} holds because Line~\ref{code:APPROX_deletecondition} in Algorithm~\ref{alg:algo_APPROX} provides a lower bound for any arm $i$ still in $K_{m+1}$.

\end{document}